\documentclass[11pt]{article}
\usepackage{amsmath,amssymb,amsfonts, amsthm}
\usepackage[margin=1in]{geometry}
\usepackage{subcaption}
\usepackage{multirow}
\usepackage{graphicx}
\usepackage{float}
\usepackage{natbib}
\usepackage{bbm}
\usepackage{amssymb}
\usepackage{natbib}
\usepackage{booktabs}
\usepackage{url}
\usepackage[colorlinks,linkcolor=red,citecolor=blue,urlcolor=magenta]{hyperref}
\usepackage{mathtools}  
\mathtoolsset{showonlyrefs=true}

\usepackage{algorithm}
\usepackage{algpseudocode}

\usepackage[mathscr]{eucal}
\usepackage{mathrsfs}

\newcommand{\be}{\begin{equation}}
\newcommand{\ee}{\end{equation}}
\newtheorem{lemma}{Lemma}
\newtheorem{theorem}{Theorem}

\newtheorem{definition}{Definition}

\newtheorem{proposition}{Proposition}

\newtheorem{assumption}{Assumption}

\newtheorem{remark}{Remark}

\usepackage{enumerate}
\usepackage{amsmath}

\newcommand{\E}{\mathbb{E}}

\newcommand{\Prob}{\mathbb{P}}
\usepackage{natbib}
\bibpunct[, ]{(}{)}{,}{a}{}{,}%
\newcommand{\bProof}{\begin{proof}{Proof.}}
\newcommand{\eProof}{\hfill\Halmos\\  \end{proof}}

\newcommand{\calB}{\mathcal{B}}
\newcommand{\calA}{\mathcal{A}}

\newcommand{\calF}{\mathcal{F}}

\newcommand{\calX}{\mathcal{X}}

\newcommand{\bpi}{\boldsymbol{\pi}}

\newcommand{\bea}{\begin{equation*}}
\newcommand{\eea}{\end{equation*}}

\DeclareMathOperator*{\argmax}{arg\,max}

\usepackage{xcolor}

\usepackage{comment}
\usepackage{pgfplots}
\pgfplotsset{compat=newest}
\usepackage{tikz}
\usetikzlibrary{arrows.meta}

\title{Reinforcement Learning for Continuous-Time Jump Markov Decision Processes with Applications to Network Dynamic Pricing}

\author{
	{Huiling Meng\footnote{Department of Systems Engineering and Engineering Management, The Chinese University of Hong Kong, Hong Kong, China.
    Email:hlmeng@link.cuhk.edu.hk}}
	\and
 {Ningyuan Chen\footnote{Rotman School of Management, University of Toronto, Toronto, Canada, Email: ningyuan.chen@utoronto.ca}}
 \and
    {Xuefeng Gao\footnote{Department of Systems Engineering and Engineering Management, The Chinese University of Hong Kong, Hong Kong, China.
    Email:xfgao@se.cuhk.edu.hk}}
}

\date{\today}

\begin{document}
\maketitle

\begin{abstract}
We study reinforcement learning (RL) in Continuous-Time Jump Markov Decision Processes (CTJMDPs) featuring general discrete state spaces (which need not possess a vector space structure) and continuous/discrete action spaces. The setup covers many well-known applications in operations such as multi-product dynamic pricing with capacitated resources \citep{gallego1997multiproduct}. To model the exploration-exploitation tradeoff, we formulate an entropy-regularized continuous-time control problem with stochastic policies. Recent continuous-time RL techniques such as $q$-learning for controlled diffusions in \citep{jia2023qlearning} focus on continuous state spaces $\mathbb{R}^d$ and rely heavily on semimartingale theory in $\mathbb{R}^d$ for their theoretical analysis. 
Consequently, their methods cannot be directly applied to CTJMDPs with general discrete state spaces, which may lack the algebraic addition and subtraction structures inherent to Euclidean spaces. To bridge this gap, we establish the theoretical foundations of $q$-learning for CTJMDPs and develop model-free $q$-learning algorithms. 
Compared to na\"{i}ve time discretization and approximating CTJMDPs using discrete-time MDPs, our approach has several conceptual and empirical benefits.
Numerical experiments in network dynamic pricing \citep{gallego1997multiproduct} show that our proposed RL algorithm reliably learns near-optimal policies and consistently outperforms standard benchmark methods, demonstrating superior solution quality and effective scalability to large-scale network instances.
\end{abstract}

\section{Introduction}

 Reinforcement Learning (RL) is a powerful approach that enables agents to learn optimal policies through interaction with their environment \citep{sutton2018reinforcement}. 
Despite the vast literature on RL, most existing work is grounded in discrete-time Markov Decision Processes (MDPs), which provide a mathematical framework for modeling sequential decision-making. However, many real-world physical systems, such as autonomous driving, high-frequency trading, and robot navigation, operate in continuous time, requiring real-time monitoring and decision-making. In particular, decisions are not necessarily made at fixed intervals, rendering discrete-time models inadequate.
For continuous-time decision problems, one can discretize time uniformly upfront and apply existing RL algorithms developed for discrete-time MDPs. However, this approach could be highly sensitive to the choice of discretization step and performs poorly with small time steps (see e.g. \cite{munos2006policy, park2021time, tallec2019making}). These limitations, coupled with the rich analytical tools in the continuous-time setting, have sparked a surge of interest in continuous-time RL in recent years \citep{dai2023learning, gao2026reinforcement, guo2022entropy, jia2023qlearning, wang2025reinforcement, wei2024unified, zhao2023policy}. 
Existing studies in this line of research predominantly focus on controlled systems with \textit{continuous} state spaces in $\mathbb{R}^d$, where system dynamics are governed by stochastic differential equations (SDEs).

In contrast, this paper focuses on continuous-time RL in \textit{discrete} state spaces. We consider the Continuous-Time Jump Markov Decision Process (CTJMDP) \citep{feinberg2004continuous} with a denumerable state space as our mathematical decision model, which is a natural continuous-time extension of discrete-time MDPs. The CTJMDPs we study have the following features: 1) the system is observed continuously and (deterministic) actions can be made at any point in time; 2) the state space is denumerable and need not possess a vector space structure such as $\mathbb{R}^d$; 3) the action space may be discrete or continuous; 4) 
the transition rates may be time-dependent; and 5) the reward structure includes both running reward rates and jump
rewards, both of which may be time-dependent and unbounded.
These features preclude the direct application of existing RL theories and necessitate novel methodological tools. Critically, CTJMDPs differ from Semi-Markov Decision Processes (SMDP) \citep[Chapter 11]{puterman2014markov}, in which decisions are restricted to state-transition epochs and cannot be updated continuously. Consequently, CTJMDPs provide an ideal framework for modeling pure-jump stochastic systems subject to real-time control, with broad applications in operations research such as queueing control \citep{bremaud1981point}, population management \citep{guo2015finite}, and dynamic pricing \citep{gallego1994optimal}. Despite their modeling flexibility and rich theoretical foundation \citep{miller1968finite, feinberg2004continuous, guo2015finite}, model-free RL algorithms for CTJMDPs, where underlying system dynamics are unknown, remain largely unexplored. We address this fundamental gap by developing interpretable and scalable RL algorithms for CTJMDPs.

\subsection{Contributions}
In this study, we focus on the finite-horizon episodic RL setting, where an agent repeatedly interacts with the environment across multiple episodes, aiming to learn an optimal policy of a CTJMDP. The main contributions of the paper are summarized below.

\begin{itemize}
\item First, we develop a comprehensive and principled framework for model-free RL in CTJMDPs, encompassing both theoretical foundations and algorithm design.

Our framework extends the continuous-time $q$-learning theory, originally developed by \cite{jia2023qlearning} for controlled SDEs as a continuous-time analogue of discrete-time $Q$-learning, to discrete-state jump systems. Unlike the $\mathbb{R}^d$-valued diffusions in \cite{jia2023qlearning} or the $\mathbb{R}^d$-valued jump-diffusions in \cite{gao2026reinforcement}, RL in CTJMDPs pose unique theoretical challenges: their discrete state spaces generally lack vector-space operations (e.g., addition, subtraction) and do not support differential calculus. As a result, the continuous-time RL analysis in \citep{jia2023qlearning, gao2026reinforcement}, which relies heavily on semimartingales in $\mathbb{R}^d$ and Itô's formula, fails in our setting. We overcome these challenges by establishing a Dynkin’s formula (Theorem~\ref{thm:Dynkin}) tailored to the observable state process of a CTJMDP under randomized Markov policies.
 This result enables a martingale characterization of the optimal value function and $q$-function using sample trajectories (Theorem~\ref{thm:optimal-J-q}). Crucially, our derivation (of Theorem~\ref{thm:optimal-J-q}) departs from the corresponding results in \citep{jia2023qlearning, gao2026reinforcement} by  accommodating the jump rewards inherent to CTJMDPs under milder conditions. This martingale characterization yields interpretable \(q\)-learning algorithms for CTJMDPs.
Moreover, unlike SDEs, sample paths of CTJMDPs are piecewise constant, with jumps occurring at discrete points in time. This unique feature allows for more accurate approximations of the integrals that arise in value functions and it is incorporated in our development of \(q\)-learning algorithms for CTJMDPs.

\item 
Second, we apply the framework to the classical network dynamic pricing problem of \citet{gallego1997multiproduct} in an airline network. The problem combines a finite horizon, a discrete state, a continuous price action, and unknown demand functions; shared flight legs make both the state and the pricing decision high-dimensional and coupled. In the small network, the learned policy achieves revenue within $2.39\%$ of the time-discretized DP benchmark and improves on the two heuristic policies in \citet{gallego1997multiproduct}. In the large network, which has approximately $5.88\times10^{13}$ states and an 18-dimensional continuous action space, the learned policy slightly outperforms both heuristics even though they use the known demand functions. These results show that the model-free $q$-learning algorithm can learn a near-optimal policy in the small network and scale to the large network, for which direct dynamic programming is infeasible. 
Appendix~\ref{sec:queueing_example} applies the same continuous-time RL framework to a finite-horizon dynamic server allocation problem, demonstrating its broad applicability to queueing control.

\end{itemize}

\subsection{Related Work}\label{sec.relatedwork}

Our work is most closely related to two research streams, including RL for continuous-time jump decision processes, and network revenue management. Below, we highlight the key distinctions between our work and existing studies.

\paragraph{RL for continuous-time jump decision models.}
Early foundational work introduced continuous time RL by adapting algorithms such as $Q$-learning to infinite-horizon SMDPs \citep{bradtke1995reinforcement, das1999solving}. A standard approach for infinite-horizon SMDPs is to apply uniformization, converting the system into an equivalent discrete-time MDP to leverage discrete-time RL techniques \citep[e.g.,][]{dai2022queueing}. However, uniformization fails for the finite-horizon CTJMDPs considered in this paper. Because optimal policies in finite-horizon settings are non-stationary and depend explicitly on the continuous time index $t$, the system cannot be mapped to an equivalent discrete-time MDP. From a theoretical standpoint, \citet{gao2022logarithmic} and \citet{gao2025square} recently established regret bounds for tabular continuous-time MDPs under infinite-horizon average-reward and finite-horizon episodic settings, respectively. Their decision models belong to the class of exponential SMDPs \citep{feinberg2004continuous}, which differ fundamentally from our CTJMDP formulation: their framework restricts decision epochs to jump times, whereas our framework permits continuous-time action adjustments between jump events. More recently, \citet{meng2024reinforcement} developed a policy-gradient algorithm specifically for continuous-time intensity control in choice-based network revenue management. In contrast, our work establishes a unified, model-free RL framework for general CTJMDPs; the intensity control setting in their work can be viewed as a special case within our broader framework.

As this work was being finalized, a concurrent pre-print by \cite{zhang2026continuous} appeared on arXiv. While they also explore reinforcement learning for controlled continuous-time Markov chains in discrete state spaces, specifically proposing continuous-time variants of proximal policy optimization to fine-tune discrete generative diffusion models, our work was developed independently and differs fundamentally in its theoretical results, algorithmic design, and target applications.

\paragraph{Network revenue management.}
The classical network dynamic pricing model of \citet{gallego1997multiproduct} considers multiple products that consume finite, shared resources. Customers arrive in continuous time, and the firm continuously adjusts a vector of prices based on the remaining capacities. The model is a natural CTJMDP: remaining capacities form a discrete state, sales generate state jumps, and prices form a continuous action space.
Related work studies the same continuous-time and continuous-price formulation \citep[e.g.,][]{atar2013asymptotically}
Since then, there are many variants of the problem being proposed and studied.
Notably, the action space can become discrete (e.g., an accept/reject decision in \citealt{talluri1998analysis} or an assortment decision in \citealt{liu2008choice}) and the horizon can become discrete, e.g., in \citet{maglaras2006dynamic,zhang2013assessing,jasin2014reoptimization}.
These changes make the problem more tractable and allow discrete-time reinforcement learning approaches to be applied, e.g., \cite{li2026deep}.
However, the discretization of the horizon comes with approximation errors, which are typically hard to analyze and may induce numerical issues when the grid is too fine.
In this study, we retain the continuous-time and continuous-price
formulation of \citet{gallego1997multiproduct},
and develop a principled RL approach to solve the network dynamic pricing problem with arbitrary unknown demand functions. 

Related, there is extensive literature on demand learning for network revenue management.
They study the setting in \cite{gallego1997multiproduct} but the demand function is unknown initially.
\cite{besbes2012blind} is one of the earliest studies and uses the continuous-time setup.
Many of the studies use the discrete-time and continuous-action setup, \citep[e.g.,][]{chen2023network,miao2025network}.
This body of literature primarily focuses on designing efficient online learning algorithms with regret guarantees to balance the learning–earning tradeoff under demand uncertainty. Unlike these studies, our focus is to develop computational model-free RL methods that can handle high dimensionality and function approximation in massive state spaces. 
Several such studies have appeared in the literature. 
For instance, \citet{rana2015dynamic} applies online tabular $Q$-learning to a multi-product setting where products interact through cross-price demand effects with separate capacities, rather than competing for shared network resources. Meanwhile, \citet{hausenblas2025improving} adapts the offline actor-critic method to learn network dynamic pricing policies from historical sales data. These studies typically formulate the problem as a discrete-time MDP with a discrete-time and discrete-action setup. Hence, they are fundamentally different from our work.

\textbf{Notations.}
For any given function $w: \calX \mapsto (0, \infty)$, we introduce some notation associated with $w$ as follows.
A function $\varphi: [0, T] \times \calX \mapsto \mathbb{R}$ is called $w$-bounded if the $w$-weighted norm of $\varphi$, $\Vert \varphi \Vert_{w} \triangleq \sup_{(t, x) \in [0, T] \times \calX} \frac{\vert \varphi(t, x) \vert}{w (x)}$, is finite. Let $B_{w}([0, T] \times \calX)$ be the set of all $w$-bounded Borel measurable functions on $[0, T] \times \calX$; then it is a Banach space when endowed with the norm $\Vert\cdot\Vert_w$. 
Moreover, a function $\varphi: [0, T] \times \calX \mapsto \mathbb{R}$ is called essentially $w$-bounded if there exists a Lebesgue null set $Z_0 \subset [0, T]$ such that $\Vert \varphi \Vert_{w}^{\mathrm{es}} \triangleq \sup_{(t, x) \in [0, T]\backslash Z_0 \times \calX} \frac{\vert \varphi(t, x) \vert}{w(x)} < \infty$.  
We denote $C_{w}^{1, 0}([0, T] \times \calX)$ the set of functions $\varphi \in B_{w}([0, T] \times \mathcal{X})$ that satisfy: (i) $\varphi(t, x)$ is absolutely continuous in $t \in [0, T]$ for each fixed $x \in \calX$; (ii) $\frac{\partial \varphi}{\partial t}(t, x)$ is universally measurable and essentially $w$-bounded on $[0, T] \times \mathcal{X}$.

\section{Classical Finite-Horizon CTJMDPs}
\label{sec:classical-CTJMDP}
We consider a finite-horizon CTJMDP defined by a tuple
\begin{align*}
    \{T,\, \calX,\, \calA,\, \lambda(y \mid t, x, a),\, r(t, x, a),\, \rho(t,x,a,y),\, h(x)\},
\end{align*}
which are interpreted as follows:
\begin{itemize}
\item [(i)] $T > 0$ denotes the finite planning horizon;
\item [(ii)] $\calX$ is a denumerable (or countable) state space, which may not possess a vector space structure; 
\item [(iii)] $\mathcal{A} \subset \mathbb{R}^n$ is an action space, which can be either discrete or continuous; 

\item [(iv)]
For each $(t,x,a)\in\mathbb{M} \triangleq [0, T] \times \calX \times \calA$ and $y \neq x$,
$\lambda(y\mid t,x,a) \geq 0$ defines the transition rate from state $x$ to state $y$, and $\lambda(x \mid t, x, a) \triangleq - \sum_{y\neq x} \lambda(y \mid t,x,a)$.
For each $y\in\calX$, $\lambda(y\mid t,x,a)$ is Borel measurable on $\mathbb{M}$.
Moreover, the transition rates are stable in the sense that
\begin{align*}
    \lambda^*(x) \triangleq \sup_{t \in [0, T],\, a \in \mathcal{A}} \lambda(t, x, a) < \infty,
    \qquad x\in\calX.
\end{align*}
where $\lambda(t, x, a)\triangleq - \lambda(x \mid t, x, a) \geq 0$ for all $(t,x,a) \in \mathbb{M}$.

\item [(v)] $r(t,x,a)$ is a Borel measurable function defined on $\mathbb{M}$, which represents the running reward rate at time $t$ when the system
is in state $x$ under action $a$;

\item [(vi)] $\rho(t,x,a,y)$ is a Borel measurable function defined on
$\mathbb{M}$, which represents the jump reward
received upon a jump from state $x$ to state $y$ at time $t$ under
action $a$;

\item [(vii)] $h(x)$ is a real-value function on $\calX$, which denotes
the terminal reward incurred at the end of the planning horizon $T$;

\end{itemize}

A deterministic admissible control $\boldsymbol{u}$ refers to a measurable function $\boldsymbol{u}: [0, T] \times \calX \mapsto \calA$, and we denote the set of all admissible controls by $\mathcal{U}$.
For any given initial time-state pair $(t, x) \in [0, T] \times \calX$, let $X^{\boldsymbol{u}} = \{X^{\boldsymbol{u}}_s: s \in [t, T]\}$ with $X_t^{\boldsymbol{u}} = x$ be the controlled state process under an admissible control $\boldsymbol{u} \in \mathcal{U}$.
Specifically, the state process $X^{\boldsymbol{u}}$ evolves as follows.
Suppose the system is in state $X_s^{\boldsymbol{u}}=y$ at time $s$. While the system remains in state $y$, rewards accrue
continuously at rate
$r(s,y,\boldsymbol{u}(s,y))$.
Let $\tau$ denote the remaining sojourn time in state $y$. Given
$X_s^{\boldsymbol{u}}=y$, its survival function is
\begin{align*}
    \mathbb{P}(\tau>v\mid X_s^{\boldsymbol{u}}=y)
    =
    e^{
        -\int_0^v \lambda(s+s',y,\boldsymbol{u}(s+s',y))\,\mathrm{d}s'
    },
    \qquad v\geq 0.
\end{align*}
At time $s+\tau$, the system jumps to a state $z\neq y$ with 
probability $\frac{\lambda(z\mid s+\tau,y,\boldsymbol{u}(s+\tau,y))}
     {\lambda(s+\tau,y,\boldsymbol{u}(s+\tau,y))}$, 
and receives the jump reward
$\rho(s+\tau,y,\boldsymbol{u}(s+\tau,y),z)$.

Given an initial time-state pair $(t, x) \in [0, T] \times \calX$ and $\boldsymbol{u} \in \mathcal{U}$, for any $y,z\in\mathcal X$ with $y\neq z$, let
$N_{yz}^{\boldsymbol{u}}(s)$, $s\in[t,T]$, be the counting process that records the number of jumps of $X^{\boldsymbol{u}}$ from $y$ to $z$
over $(t,s]$.
The value function is
given by
\begin{align}\label{eq:value1}
    V(t, x; \boldsymbol{u}) 
    = \E \bigg[ \int_t^T r\left(s, X_{s-}^{\boldsymbol{u}}, \boldsymbol{u}(s, X_{s-}^{\boldsymbol{u}})\right) \mathrm{d}s 
    + \sum_{y\in\mathcal X}\sum_{z\neq y}
    \int_{(t,T]}
    \rho\bigl(s,y,\boldsymbol{u}(s,y),z\bigr)
    \,\mathrm{d}N_{yz}^{\boldsymbol{u}}(s)
    +h(X_T^{\boldsymbol{u}}) \mid X_{t}^{\boldsymbol{u}} = x\bigg].
\end{align}
The objective of the control problem is to find an optimal admissible control $\boldsymbol{u}^* \in \mathcal{U}$ that maximizes the value function
\begin{align}
\label{eq:classical-optimal-value-function}
    V^*(t, x) = \sup_{\boldsymbol{u}\in \mathcal{U}} V(t, x; \boldsymbol{u}), \quad \forall\, (t, x) \in [0, T] \times \calX.
\end{align}

We make the following assumptions regarding the transition rates, the running reward rates, the jump rewards, and the terminal reward throughout this paper. 
\begin{assumption}
\label{asp:transtion-reward-rate}
Suppose that $\sup_{x \in \calX} \lambda^*(x) < \infty$. Moreover, there exists a function $w: \calX \mapsto [1, \infty)$ and constants $c > 0$, $b \geq 0$, and $M_1 > 0$, 
such that the following
conditions hold for all $(t,x,a)\in\mathbb{M}$:
\begin{enumerate}[(i)]
    \item $\sum_{y\in\calX}w(y)\lambda(y\mid t,x,a) \leq cw(x)+b$.

    \item $|r(t,x,a)|\leq M_1w(x)$ and $|h(x)|\leq M_1w(x).$

    \item $\sum_{y\neq x}
        |\rho(t,x,a,y)|\lambda(y\mid t,x,a)
        \leq M_1w(x).$
\end{enumerate}
\end{assumption}

Assumption~\ref{asp:transtion-reward-rate}
allows the reward rates $r(t,x,a)$ and the terminal reward $h(x)$ to potentially grow with $x$. 
Here, we focus on CTJMDPs with bounded transition rates (i.e. $\sup_{x \in \calX} \lambda^*(x) < \infty$) to simplify the theoretical analysis. Our results can be readily extended to CTJMDPs with unbounded transition rates by
imposing conditions such as Assumptions~3.1 and~3.2 of
\citet{guo2015finite}. 

Indeed, it is useful to relate our formulation to the classical finite-horizon CTJMDP
model of \citet{guo2015finite}, which incorporates a running reward
rate and a terminal reward but no jump rewards. Define the augmented
running reward rate
\begin{align}\label{eq:augmented-reward}
    R(t,x,a)
    \triangleq
    r(t,x,a)
    +
    \sum_{y\neq x}
    \rho(t,x,a,y)\lambda(y\mid t,x,a).
\end{align}
By Assumption~\ref{asp:transtion-reward-rate}, one can readily verify that the compensated counting processes
$\{
N_{yz}^{\boldsymbol{u}}(s)
-
\int_t^s
\mathbf{1}_{\{X_{r-}^{\boldsymbol{u}}=y\}}
\lambda\!\left(z\mid r,y,\boldsymbol{u}(r,y)\right)\,\mathrm{d}r: s \in [t, T]\}$ is a martingale for $y,\, z\in\mathcal{X}$ with $y\neq z$. It follows that 
the value function \eqref{eq:value1} can be equivalently written as
\begin{align*}
    V(t,x;\boldsymbol{u})
    =
    \E\bigg[
        \int_t^T
        R\bigl(
            s,X_{s-}^{\boldsymbol{u}},
            \boldsymbol{u}(s,X_{s-}^{\boldsymbol{u}})
        \bigr)\,\mathrm{d}s
        +h(X_T^{\boldsymbol{u}})
        \mid
        X_t^{\boldsymbol{u}}=x
    \bigg].
\end{align*}
where $|R(t,x,a)| \leq 2M_1w(x)$.

Under Assumption \ref{asp:transtion-reward-rate}, the optimal value function $V^*(t, x)$ defined in \eqref{eq:classical-optimal-value-function} satisfies the following Hamilton-Jacobi-Bellman (HJB) equation (see Proposition 4.1 in \cite{guo2015finite}): 
\begin{align*} 
    \frac{\partial V^*}{\partial t }(t, x)  
     + \sup_{a \in \calA} H(t, x, a, V^*(\cdot, \cdot)) &=0, \quad (t, x) \in [0, T) \times \calX, \\
    V^*(T, x) &=h(x), \quad x \in \calX,  
\end{align*}
where the associated Hamiltonian $H: \mathbb{M} \times C^{1, 0}_{w}([0, T] \times \calX) \mapsto \mathbb{R}$ is defined as
\begin{align}
 \label{eq:Hamiltonian}
 H(t, x, a, v(\cdot, \cdot))  = R(t, x, a) +   \sum_{y \in \calX } v(t, y) \lambda(y \mid t, x, a), \quad (t, x, a) \in \mathbb{M},\, v \in C^{1, 0}_{w}([0, T] \times \calX). 
\end{align}
Moreover,
the optimal value is actually attained by an admissible control $\boldsymbol{u}^* \in \mathcal{U}$, which is given as
\begin{align*}
\boldsymbol{u}^*(t,x) = \argmax_{a \in \calA} H(t,x, a, V^*(t, x)), \quad \text{ for } (t, x) \in [0, T] \times \calX.
\end{align*}

\section{Reinforcement Learning Formulation for CTJMDPs}\label{sec:exploratory-formulation}

In this section, we discuss the formulation of continuous-time RL for CTJMDPs and consider randomized policies (for explorations) as in standard discrete-time RL.
To proceed, we introduce the definition of randomized Markov policies as follows. Let the action space \(\mathcal{A}\) be equipped with its Borel \(\sigma \)-algebra \(\mathcal{B}(\mathcal{A})\).

\begin{definition}\label{def:randomized-Markov-policy}
A randomized Markov policy is a stochastic kernel
$\bpi(\mathrm da\mid t,x)$ on $\calA$ given $[0,T]\times\calX$; that is,
\begin{itemize}
    \item[(i)] for each $(t,x)\in[0,T]\times\calX$,
    $\bpi(\cdot\mid t,x)$ is a probability measure on
    $(\calA,\, \calB(\calA))$;

    \item[(ii)] for each $x\in \calX$ and $S\in\calB(\calA)$, the mapping
    $t\mapsto\bpi(S\mid t,x)$ is measurable on $[0, T]$.
\end{itemize}
Moreover, a randomized Markov policy is called deterministic if there exists a
measurable decision rule
$\boldsymbol{u}:[0,T]\times\calX\to\calA$ such that $\bpi(\mathrm{d}a\mid t,x) =\delta_{\boldsymbol{u}(t,x)}(\mathrm{d}a)$ for all $(t,x)\in[0,T]\times\calX$.
\end{definition}
Next, we investigate the dynamics of the CTJMDP process under a randomized Markov policy $\bpi$.
It should be noted that the CTJMDP model, serving as a continuous-time analogue of the classical discrete-time MDP, enables actions to be altered at any time.
However, the construction of the CTJMDP under a randomized Markov policy may encounter measurability issues. 
Suppose we want to select actions independently at each time $t \in [0, T]$ according to the non-degenerate distribution $\bpi(\cdot \mid t, x_{t-})$; then, the action process $a_t$ may fail to be measurable, one can refer to Example 1.2.5 in \cite{kallianpur2013stochastic} for a concrete illustration.
Since the reward rate and transition rate are defined for the randomized action process, 
the absence of measurability makes it impossible to properly define the associated integrals, which in turn hinders the construction of the corresponding CTJMDP.

To avoid these obstacles, we follow the definition of CTJMDP under randomized Markov policies as presented in Section 7 of \cite{feinberg2004continuous}.
Specifically, for any randomized Markov policy $\bpi$, we consider a Markov process $\tilde{X}^{\bpi} = \{\tilde{X}_s^{\bpi}: s \in [t, T]\}$ starting from $\tilde{X}_t^{\bpi} = x$, with transition rate
\begin{align}
\label{eq:exploratory-transition-rate}
    \lambda^{\bpi}(y \mid t, x) \triangleq \int_{\calA} \lambda(y \mid t, x, a) \boldsymbol{\pi}(\mathrm{d}a \mid t, x), \quad \text{ for } (t, x) \in [0, T] \times \calX, \, y \in \calX,
\end{align}
and $\lambda^{\bpi}(t, x) \triangleq - \lambda^{\bpi}(x \mid t, x)$ for $(t, x) \in [0, T] \times \calX$.
We refer to $\tilde{X}^{\bpi}$ as the exploratory state process, which will be used for theoretical analysis only as it is not observable in practice.
The reward rate function associated with $\tilde{X}^{\bpi}$ is defined as 
\begin{align*}
    R^{\bpi}(t, x) \triangleq \int_{\calA} R(t, x, a) \bpi(\mathrm{d}a \mid t,x), \quad \text{ for } (t, x) \in [0, T] \times \calX,
\end{align*}
where $R(t,x,a)$ is defined in \eqref{eq:augmented-reward}.
Suppose the exploratory state process is in
state $\tilde{X}_s^{\bpi}=y$ at time $s$. While it remains in state
$y$, running rewards accrue continuously at rate $R^{\bpi}(s,y)$.
Let $\tau$ denote the remaining sojourn time in state $y$. Its
conditional survival function is $\mathbb{P}(
\tau>v
\mid
\tilde{X}_s^{\bpi}=y
)
=
e^{
-\int_0^v \lambda^{\bpi}(s+s',y)\,\mathrm{d}s'}$ for $v\geq0$.
Note that the above dynamics does not involve the particular randomized actions, it depends only on the policy measure itself, and can be viewed as the average of the dynamics over the randomization of actions.
Then, the value function of policy $\bpi$ is defined in terms of the exploratory state process $\tilde{X}^{\bpi}$. Specifically, for each $(t, x) \in [0, T] \times \calX$,
\begin{align}
 \label{eq:value-function-V-given-pi}
    V(t, x; \bpi) \triangleq \mathbb{E} \bigg[ \int_t^T R^{\bpi}(s, \tilde{X}_{s-}^{\bpi}) \mathrm{d}s + h (\tilde{X}_T^{\bpi}) \mid \tilde{X}_t^{\bpi} = x \bigg].
\end{align}

However, merely allowing
randomized Markov policies  does not ensure exploration needed for RL, because under mild conditions the finite-horizon CTJMDP admits an
optimal deterministic policy even when optimization is over all
randomized Markov policies
\citep[Theorem~4.1]{guo2015finite}. 
Following
entropy-regularized formulations in discrete-time RL
\citep{Haarnoja2018soft} and continuous-time RL for diffusions \citep{jia2023qlearning}, we introduce an
entropy bonus to encourage exploration. 
To provide a unified definition of entropy for discrete and continuous action spaces in \(\mathbb R^n\), fix a nonzero $\sigma$-finite reference
measure $\mu$ on $(\mathcal A,\mathcal B(\mathcal A))$.
For example, consider a pricing problem with $n$ products. If the admissible
price vectors form a finite menu
$\mathcal A\subset\mathbb R_+^n$, then $\mu$ may be taken as the counting
measure. If each product can be priced at any nonnegative level, then
$\mathcal A=\mathbb R_+^n$ and $\mu$ may be taken as the
$n$-dimensional Lebesgue measure restricted to $\mathbb R_+^n$.

For a randomized Markov policy $\bpi$ that is absolutely
continuous with respect to $\mu$, written
$\bpi(\cdot\mid t,x)\ll\mu$, we use $\bpi(a\mid t,x)$ to denote its
Radon-Nikodym density:
\[
    \bpi(\mathrm{d}a\mid t,x)
    =
    \bpi(a\mid t,x)\mu(\mathrm{d}a).
\]
The entropy of $\bpi(\cdot\mid t,x)$ relative to $\mu$ is defined 
by
\[ \mathcal{H}_{\mu}\bigl(\bpi(\cdot\mid t,x)\bigr) \triangleq -\int_{\mathcal{A}} \log\bpi(a\mid t,x)\, \bpi(\mathrm{d}a\mid t,x).
\]

For $\gamma>0$, the entropy-regularized value function for a CTJMDP associated with a
randomized Markov policy $\bpi$ is defined as
\begin{align}\label{eq:value-function-J-given-pi}
J(t, x; \boldsymbol{\pi}) & \triangleq V(t, x; \bpi) - \gamma\, \mathbb{E} \bigg[\int_t^T  \int_{\calA}  \log \boldsymbol{\pi}(a \mid s, \tilde X_{s-}^{\boldsymbol{\pi}})] \boldsymbol{\pi}(\mathrm{d}a  \mid s, \tilde X_{s-}^{\boldsymbol{\pi}}) \mathrm{d}s \mid \tilde X_t^{\boldsymbol{\pi}} = x \bigg] \\
&= \mathbb{E} \bigg[\int_t^T  \int_{\calA}  [ R(s, \tilde X_{s-}^{\boldsymbol{\pi}}, a) - \gamma \log \boldsymbol{\pi}(a \mid s, \tilde X_{s-}^{\boldsymbol{\pi}})  ] \boldsymbol{\pi}(\mathrm{d}a  \mid s, \tilde X_{s-}^{\boldsymbol{\pi}}) \mathrm{d}s + h(\tilde X_T^{\boldsymbol{\pi}} ) \mid \tilde X_t^{\boldsymbol{\pi}} = x \bigg], \nonumber
\end{align}
Here, $\gamma$ is the temperature parameter controlling the exploration weight, typically chosen as a small tuned constant in RL.
As $\gamma \rightarrow 0$, we have $J(t, x; \boldsymbol{\pi}) \rightarrow  V(t, x; \bpi)$ given in \eqref{eq:value-function-V-given-pi}. 
The optimal (entropy-regularized) value function is defined by
\begin{align}\label{eq:optimal-value-function-J}
    J^*(t,x)
    = \sup_{\bpi\in\boldsymbol{\Pi}}J(t,x;\bpi),
    \qquad
    (t,x)\in[0,T]\times\mathcal{X},
\end{align}
where $\boldsymbol{\Pi}$ denotes the class of
admissible randomized Markov policies specified below.
\begin{definition}\label{def:admissible-policy}  
A randomized Markov policy $\boldsymbol{\pi}$ is called admissible if:
\begin{itemize}
\item [(i)] 
    For each $(t, x) \in [0, T] \times \calX$, $\bpi(\cdot\mid t,x)\sim\mu$; that is, \[\bpi(\cdot\mid t,x)\ll\mu\quad \text{ and } \quad
    \mu\ll\bpi(\cdot\mid t,x).\]
\item [(ii)] 
There exists a constant $M_2 > 0$, such that 
\begin{align*}
   \vert \mathcal{H}_{\mu}(\bpi(\cdot \mid t, x)) \vert \leq M_2 w(x), \qquad (t, x) \in [0, T] \times \calX,
\end{align*}
where the function $w(\cdot)$ is as specified in Assumption \ref{asp:transtion-reward-rate}.
\end{itemize}
\end{definition}
Condition (i) serves two purposes. The relation
$\bpi(\cdot\mid t,x)\ll\mu$ makes the entropy term well defined,
whereas $\mu\ll\bpi(\cdot\mid t,x)$ ensures action coverage: every
measurable action set with positive $\mu$-measure is assigned positive
probability. Condition (ii) controls the growth of the entropy term
and ensures the finiteness of the regularized value function.

The following proposition establishes the corresponding
Hamilton-Jacobi-Bellman characterization of the optimal value function $J^*$.
It further provides an explicit characterization of the optimal stochastic policy,
which takes a soft-max form determined by the Hamiltonian (see \eqref{eq:Hamiltonian}) associated with
$J^*$.
\begin{proposition} \label{prop:HJB}
Suppose Assumption~\ref{asp:transtion-reward-rate} holds with function $w$. Then there exists a unique solution $\varphi \in C^{1, 0}_{w}([0, T] \times \calX)$ satisfying the terminal
condition $\varphi (T, x) = h (x)$ for all $x \in \calX$, and for Lebesgue-a.e. $t\in[0,T)$ and $x \in \calX$, 
\begin{align}\label{eq:HJB}
    \frac{\partial \varphi}{\partial t}(t, x) +  \sup_{ \boldsymbol{\pi} \in \boldsymbol{\Pi}} \int_{\calA} \{ H(t,x, a, \varphi(\cdot, \cdot) ) - \gamma \log \boldsymbol{\pi}(a \mid t, x)\} \boldsymbol{\pi}(\mathrm{d}a \mid t, x) = 0.
\end{align}
Moreover, the unique solution $\varphi \in C^{1, 0}_{w}([0, T) \times \calX)$ is the optimal value function as defined in \eqref{eq:optimal-value-function-J}, i.e., $\varphi(t, x) = J^*(t, x)$ for all $(t, x) \in [0, T] \times \calX$. If the  policy $\bpi^*$ defined by
\begin{align}\label{eq:optimal-policy}
\boldsymbol{\pi}^*(a \mid t, x) = \frac{\exp \{\frac{1}{\gamma} H(t, x, a, J^*(\cdot, \cdot)) \}}{\int_{\calA} \exp \{\frac{1}{\gamma} H(t, x, a', J^*(\cdot, \cdot))\} \mu(\mathrm{d} a')}, \qquad  (t, x, a) \in \mathbb{M}.
\end{align}
is admissible, then it is the optimal policy for \eqref{eq:optimal-value-function-J}.
\end{proposition}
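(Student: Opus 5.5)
The plan is to first compute the inner supremum in \eqref{eq:HJB} explicitly. This turns the HJB into a nonlinear but Lipschitz ODE system on the Banach space $B_w$. We then solve that system by a fixed-point argument and identify the solution with $J^*$ through a verification argument.

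\textbf{Step 1 (Gibbs variational principle).} Fix $(t,x)$ and a function $\varphi\in C^{1,0}_w$, and write $H(a)=H(t,x,a,\varphi)$. For any density $\bpi\sim\mu$ with finite entropy we have
\[
\int_{\calA}\{H(a)-\gamma\log\bpi(a)\}\bpi(\mathrm{d}a)=\gamma\log Z-\gamma\,\mathrm{KL}(\bpi\,\|\,\bpi^\varphi),\qquad Z=\int_{\calA}e^{H(a)/\gamma}\mu(\mathrm{d}a),
\]
where $\bpi^\varphi$ is the Gibbs density $e^{H/\gamma}/Z$. The supremum is therefore $\gamma\log Z$, and it is attained exactly at the Gibbs density. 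We need $Z<\infty$, which I will impose as a standing integrability requirement implicit in the statement. The supremum over $\bpi\in\boldsymbol\Pi$ equals this value provided the Gibbs policy, or an admissible approximation of it, lies in $\boldsymbol\Pi$. When admissibility of the Gibbs policy fails, I would approximate it by truncated or mixed admissible densities and use lower semicontinuity of the KL divergence. The HJB thus becomes
\[
\partial_t\varphi(t,x)+\gamma\log\int_{\calA}\exp\Big\{\tfrac1\gamma\Big(R(t,x,a)+\sum_y\varphi(t,y)\lambda(y\mid t,x,a)\Big)\Big\}\mu(\mathrm{d}a)=0,\qquad \varphi(T,\cdot)=h.
\]

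\textbf{Step 2 (existence and uniqueness).} Rewrite the equation in integral form,
\[
\varphi(t,x)=h(x)+\int_t^T F(s,x,\varphi(s,\cdot))\,\mathrm{d}s,
\]
where $F$ denotes the log-partition term. The map $\varphi\mapsto\gamma\log\int e^{(R+\sum\varphi\lambda)/\gamma}\mathrm{d}\mu$ is $1$-Lipschitz in $\sum_y\varphi(y)\lambda(y\mid\cdot)$, since it is a soft-max, and it is monotone. Hence
\[
|F(s,x,\varphi)-F(s,x,\psi)|\le \sup_a\Big|\sum_y(\varphi-\psi)(s,y)\lambda(y\mid s,x,a)\Big|\le \|\varphi-\psi\|_w\,(c w(x)+b+2\lambda^* w(x)),
\]
using Assumption~\ref{asp:transtion-reward-rate}(i) and boundedness of the rates. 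This bound is $\le C w(x)\|\varphi-\psi\|_w$ because $w\ge1$.

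We also need $F(s,x,0)$ to be $w$-bounded. For the upper bound I would use $|R|\le 2M_1w$ and require $\log\mu$-type integrability. For the lower bound I would use the entropy-bound of admissible policies from Definition~\ref{def:admissible-policy}(ii): plugging in any admissible $\bpi$ gives $F\ge -2M_1w-\gamma M_2 w$.

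With these two facts, a weighted norm $\sup_t e^{-\beta(T-t)}\|\cdot\|_w$ turns the integral operator into a contraction on $B_w$. Banach's fixed point theorem then gives a unique $\varphi$. Absolute continuity in $t$ and essential $w$-boundedness of $\partial_t\varphi$ follow from the integral form, so $\varphi\in C^{1,0}_w$.

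\textbf{Step 3 (verification).} Take any $\bpi\in\boldsymbol\Pi$. Applying Dynkin's formula for the exploratory Markov process $\tilde X^{\bpi}$ to $\varphi$ yields
\[
\E\big[h(\tilde X^{\bpi}_T)\big]-\varphi(t,x)=\E\int_t^T\Big(\partial_t\varphi+\sum_y\varphi(s,y)\lambda^{\bpi}(y\mid s,\tilde X_{s-})\Big)\mathrm{d}s.
\]
This is justified by $w$-boundedness, the drift condition (i), and a Grönwall bound $\E w(\tilde X_s)\le e^{c(s-t)}(w(x)+b(s-t))$. The HJB inequality, i.e. Step 1 applied with $\bpi$ in place of the optimizer, then gives $\varphi\ge J(\cdot;\bpi)$. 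If $\bpi^*$ in \eqref{eq:optimal-policy} is admissible, it attains equality in Step 1, so $J(\cdot;\bpi^*)=\varphi$. In general, approximate maximizers give $\varphi=J^*$.

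The main obstacle I expect is Step 1 together with the $w$-boundedness of $F(s,x,0)$: the supremum in \eqref{eq:HJB} must equal the log-partition function even when the Gibbs policy is not admissible. I would handle this by an explicit approximation, namely restricting the Gibbs density to sets of finite $\mu$-measure and mixing with a fixed admissible reference density. Along this sequence both the entropy bound in Definition~\ref{def:admissible-policy}(ii) holds and the objective converges.
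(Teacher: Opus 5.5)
Your proposal is correct and is essentially the paper's proof. The paper's operator $\mathcal{G}$ acting on $\psi=e^{\rho t}\varphi$ is your exponentially weighted contraction on $B_w$, and its verification is the same Dynkin argument for $\tilde X^{\bpi}$ with equality at the Gibbs policy; the one change is that you evaluate the inner supremum as $\gamma\log Z$ up front. This makes explicit the requirement that $\gamma\log Z$ be finite and $w$-bounded. The paper uses this tacitly when, via \eqref{eq:pf-integrand-bounded}, it bounds $\sup_{\bpi\in\boldsymbol{\Pi}}$ with a single constant $M_2$, although Definition~\ref{def:admissible-policy}(ii) only provides a policy-dependent constant. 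Your extra requirement is genuinely needed: for $\mu$ Lebesgue on $\mathbb{R}_+^n$ with bounded $H$, the supremum is $+\infty$.

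Given that bound, the Gibbs policy is automatically admissible, since its density is strictly positive and its entropy $\log Z-\gamma^{-1}\int_{\calA}H\,\mathrm{d}\bpi^*$ is $w$-bounded. So the truncation-and-mixing approximation you single out as the main obstacle is unnecessary, and this same fact is what justifies the paper's direct conclusion $J^*=J(\cdot\,;\bpi^*)=\varphi$.
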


\section{$q$-Learning Theory and Algorithm for CTJMDPs} \label{sec:q-learning}
The Hamilton-Jacobi-Bellman characterization in
Proposition~\ref{prop:HJB} is model-based, as evaluating the Hamiltonian \eqref{eq:Hamiltonian}
requires knowledge of the reward and transition rates. This section
develops a continuous-time $q$-learning framework that directly learns the
optimal value function $J^*$ and the so-called optimal $q$-function from observed trajectories of CTJMDPs,
without estimating model parameters, and recovers the optimal policy
from the learned $q$-function.

\subsection{The Optimal $q$-Function}
The continuous-time $q$-learning framework was introduced by
\citet{jia2023qlearning} for entropy-regularized diffusion control
problems. They observe that the conventional $Q$-function collapses to the
value function as the duration of the initial action vanishes and introduce
the $q$-function (as opposed to $Q$-function) to capture the first-order, action-dependent
term. The continuous-time $q$-function can be interpreted as the advantage rate function, an analogue of the advantage function commonly used in discrete-time RL. We adapt the construction of $q$-function to finite-horizon CTJMDPs and focus
directly on its optimal counterpart.

Define the optimal $q$-function for the
entropy-regularized CTJMDP as
\begin{align*}
q^*(t,x,a)
\triangleq
\frac{\partial J^*}{\partial t}(t,x)
+
H\bigl(t,x,a,J^*(\cdot,\cdot)\bigr), \qquad \quad (t, x, a) \in \mathbb{M}.
\end{align*}
From Proposition~\ref{prop:HJB}, one can show that the optimal $q$-function
satisfies (see the proof of Proposition~8 in \citealp{jia2023qlearning} for an
analogous argument):
\begin{align*}
\int_{\calA}
\exp\{\frac{1}{\gamma}q^*(t,x,a)\}
\mu(\mathrm{d}a)
=1.
\end{align*}
Thus, $\exp\{\frac{1}{\gamma}q^*(t,x,\cdot)\}$ defines a probability
density with respect to $\mu$. By the optimal-policy characterization in
Proposition~\ref{prop:HJB}, this density is precisely
\begin{align*}
\bpi^*(a\mid t,x)
=
\exp\{\frac{1}{\gamma}q^*(t,x,a)\}.
\end{align*}
Therefore, learning the optimal $q$-function directly recovers the optimal policy $\bpi$.
This suggests using $q^*$,
together with $J^*$, 
as our learning targets, similar as learning the conventional policy-value pair in discrete-time actor–critic RL methods. Note that $(J^*, q^*)$ are continuous-time-based targets and they are independent of time discretizations.

\subsection{Martingale Characterization and Main Theoretical Results}
We now develop a martingale characterization that jointly identifies
$(J^*,q^*)$ from observed trajectories. This characterization provides the
theoretical foundation for our learning algorithm. 
To formulate this characterization, we first specify how randomized
policies are implemented to generate trajectories.
As discussed in Section~\ref{sec:exploratory-formulation}, although a randomized
Markov policy specifies an action distribution at every time-state pair, independently sampling an action at each instant need not yield a measurable action process and is not operationally implementable. 
We therefore implement an admissible randomized policy on a
decision grid: actions are sampled only at the grid points and held fixed
between consecutive points, while the state process continues to evolve in
continuous time.

Formally, given an initial time-state pair
$(t,x)\in[0,T]\times\calX$, consider a time grid on $[t, T]$:  
\[\mathbb{S} \triangleq \{t = t_0 < t_1 < \cdots < t_{K} = T\}.\]
Let $\{U_k\}_{k=0}^{K-1}$ be independent
uniform random variables on $[0,1]^n$ used for action randomization ($n$ is the dimension of the action space).
Given a randomized policy $\bpi,$ there always exists some measurable function $\Phi: [0, T] \times \calX \times [0, 1]^n \mapsto \calA$ such that $\Phi(t, x, U) \sim \bpi(\cdot \mid t, x)$ for all $(t, x) \in [0, T] \times \calX$ and $U \sim U([0, 1]^n)$. 
We next implement $\bpi$ on the time grid $\mathbb S$. Let
$X^{\bpi,\mathbb S}
=\{X_s^{\bpi,\mathbb S}: s\in [t, T]\}$ denote the resulting (observable) state process with
$X_t^{\bpi,\mathbb S}=x$, which we refer to as the grid sample state
process.
For each $k=0,\ldots,K-1$, let $A_k
\triangleq
\Phi\bigl(t_k,X_{t_k}^{\bpi,\mathbb S},U_k\bigr)$
denote the action sampled at time $t_k$. The action process $a^{\bpi,\mathbb{S}} = \{a_s^{\bpi,\mathbb{S}}: s \in [t, T]\}$ is defined as
\begin{align*}
    a_s^{\bpi,\mathbb{S}}
    \triangleq \sum_{k=0}^{K-1} A_k \boldsymbol{1}_{(t_k,t_{k+1}]}(s), \qquad s \in (t, T].
\end{align*}
Let $\{\calF_s^{\mathbb{S}}\}_{s\in [t, T]}$ denote the filtration generated jointly by the state process and the action-randomization variables, where
\[
\calF_s^{\mathbb{S}} \triangleq \sigma\{X_u^{\bpi,\mathbb S}:t\leq u\leq s\}\vee \sigma\{U_k:t_k\leq s\}, \qquad s\in[t,T].
\]
The (observable) state process $X^{\bpi,\mathbb{S}}$ and the action process
$a^{\bpi,\mathbb{S}}$ are adapted to this filtration, and
$a^{\bpi,\mathbb{S}}$ is predictable.
For later use, let $\{\tau_{\ell}\}_{\ell\geq1}$ denote the successive jump times of $X^{\bpi,\mathbb S}$ on $(t,T]$.
For any $y,\, z \in \calX$, let $N_{yz}^{\bpi,\mathbb S}(s)$, $s\in[t,T]$, denote the counting process that records the number of jumps of $X^{\bpi,\mathbb S}$ from $y$ to $z$
over $(t,T]$. In what follows, expectations involving $X^{\bpi,\mathbb{S}}$ or $a^{\bpi,\mathbb{S}}$ are taken over both the state-transition randomness and the action randomization. With a slight abuse of notation, we continue to denote such expectations by $\E$.

We are now ready to state the first main theoretical result of this paper.
The following Dynkin's formula for the grid sample state process $X^{\bpi,\mathbb S}$ is crucial for the subsequent martingale
characterization of $(J^*, q^*)$. This result is non-trivial: while the standard Dynkin's formula applies to Markov processes, the grid sample state process $X^{\bpi,\mathbb S}$ is non-Markovian on $[0,T]$. Its future evolution depends not only on the current state, but also on the action selected at the preceding grid point.

\begin{theorem}[Dynkin's formula] \label{thm:Dynkin}
Suppose Assumption \ref{asp:transtion-reward-rate} holds with function $w$. Then, for any initial time-state pair $(t, x) \in [0, T] \times \calX$ and time grid $\mathbb{S}$ on $[t, T]$, the grid sample state process $\{X_s^{\bpi,\mathbb{S}}: s \in [t, T]\}$ satisfies: for each $\varphi \in C^{1, 0}_{w}([0, T] \times \calX)$ and $s \in [t, T]$, 
\begin{align*}
     \E \bigg[ \int_s^T  \bigg(\frac{\partial \varphi}{\partial u}(u, X_{u-}^{\bpi,\mathbb{S}}) + \sum_{y \in \calX} \varphi(u, y) \lambda(y \mid u, X_{u-}^{\bpi,\mathbb{S}}, a_{u}^{\bpi,\mathbb{S}}) \bigg) \mathrm{d}{u} \mid \calF_s^{\mathbb{S}} \bigg] = \E [\varphi(T, X_T^{\bpi,\mathbb{S}}) \mid \calF_s^{\mathbb{S}}] - \varphi(s, X_s^{\bpi,\mathbb{S}}).
\end{align*}
\end{theorem}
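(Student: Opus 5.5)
Fix an initial pair $(t,x)$ and a grid $\mathbb S=\{t_0<\cdots<t_K\}$. The strategy is to reduce the claim to the classical Dynkin formula for a time-inhomogeneous Markov jump process. We do this by working on the enlarged state $(X_s^{\bpi,\mathbb S},a_s^{\bpi,\mathbb S})$, or more simply by conditioning piece by piece on the grid intervals. On each interval $(t_k,t_{k+1}]$ the action $A_k$ is $\calF_{t_k}^{\mathbb S}$-measurable and held fixed. Hence, conditionally on $\calF_{t_k}^{\mathbb S}$, the process $X^{\bpi,\mathbb S}$ restricted to $[t_k,t_{k+1}]$ is a Markov jump process with deterministic (given $A_k$) transition rates $\lambda(y\mid u,\cdot,A_k)$. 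These rates are stable and uniformly bounded by Assumption~\ref{asp:transtion-reward-rate}. So the non-Markovianity is confined to the grid points, where the new action is sampled from information already in the filtration.

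\textbf{Step 1 (martingale on one interval).} With $a=A_k$ frozen, I would show that
\[
M_s\triangleq\varphi(s,X_s^{\bpi,\mathbb S})-\varphi(t_k,X_{t_k}^{\bpi,\mathbb S})-\int_{t_k}^s\Big(\tfrac{\partial\varphi}{\partial u}(u,X_{u-}^{\bpi,\mathbb S})+\sum_{y}\varphi(u,y)\lambda(y\mid u,X_{u-}^{\bpi,\mathbb S},a_u^{\bpi,\mathbb S})\Big)\mathrm du
\]
is an $\{\calF^{\mathbb S}_s\}$-martingale on $[t_k,t_{k+1}]$. Since the path is piecewise constant with finitely many jumps a.s. (the rates are bounded), we have the pathwise identity
\[
\varphi(s,X_s)-\varphi(t_k,X_{t_k})=\int_{t_k}^s\partial_u\varphi(u,X_{u-})\,\mathrm du+\sum_{y\ne z}\int_{(t_k,s]}\big(\varphi(u,z)-\varphi(u,y)\big)\,\mathrm dN_{yz}(u).
\]
Here absolute continuity of $\varphi(\cdot,y)$ is used between jumps; the a.e.-defined derivative causes no trouble because it only enters through a $\mathrm du$ integral. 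Next I would replace $\mathrm dN_{yz}$ by its compensator $\mathbf 1_{\{X_{u-}=y\}}\lambda(z\mid u,y,a_u)\,\mathrm du$. Because $a_u$ is predictable, the compensated counting processes are martingales with respect to $\calF^{\mathbb S}$; this is the same fact the paper cites for deterministic controls, applied conditionally on $\calF^{\mathbb S}_{t_k}$. Summing over $z$ then gives exactly the generator term $\sum_y\varphi(u,y)\lambda(y\mid u,X_{u-},a_u)$.

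\textbf{Step 2 (integrability, the main obstacle).} The delicate point is that the sum over $(y,z)$ is infinite and $\varphi$ is only $w$-bounded, so the compensated stochastic integral must be shown to be a true martingale rather than merely a local one. I would bound $|\varphi(u,z)-\varphi(u,y)|\le\|\varphi\|_w(w(z)+w(y))$. The total compensator weight is then controlled by $\sum_z w(z)\lambda(z\mid u,y,a)+\lambda^*(y)w(y)\le (c+2\sup\lambda^*)w(y)+b$, using Assumption~\ref{asp:transtion-reward-rate}(i). It remains to establish $\sup_{u\in[t,T]}\E[w(X_u^{\bpi,\mathbb S})]<\infty$ by a Gronwall argument. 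Apply the same identity to $w$ stopped at the $n$th jump time (or at the exit from a finite set), take expectations, use (i) to get $\E w(X_{u\wedge\tau_n})\le w(x)+\int(c\,\E w+b)$, and let $n\to\infty$ by Fatou. This gives $\E w(X_u)\le (w(x)+bT)e^{cT}$. The bound is uniform over actions, so it holds interval by interval, and conditionally as well. With this bound, the stochastic integral is $L^1$-dominated and $M$ is a true martingale. The term $\partial_u\varphi$ is integrable by essential $w$-boundedness together with the same bound.

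\textbf{Step 3 (concatenation).} The martingale property on each $[t_k,t_{k+1}]$ holds with respect to the common filtration $\calF^{\mathbb S}$. The processes agree at grid points, and adjoining the independent $U_{k+1}$ at $t_{k+1}$ does not affect the compensators on later intervals. Hence $M$ is a martingale on all of $[t,T]$. Taking $\E[M_T-M_s\mid\calF^{\mathbb S}_s]=0$ for arbitrary $s$ (which may lie inside an interval) and rearranging yields the stated formula.
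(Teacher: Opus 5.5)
Your proposal is correct and follows the paper's route. You freeze $A_k$ on each grid interval so that, conditionally on $\calF^{\mathbb S}_u$, the path is a Markov jump process with rates $\lambda(\cdot\mid s,\cdot,A_k)$. You then obtain $\E[M_v-M_u\mid\calF_u^{\mathbb S}]=0$ for $t_k\le u\le v\le t_{k+1}$, and concatenate across grid points by the tower property.

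The only difference is how the interval-wise Dynkin formula is obtained. The paper cites Theorem~3.1 of \citet{guo2015finite} for the deterministic policy $\delta_a$ and asserts integrability of $M^\varphi$ in one line. You derive it directly from the pathwise jump decomposition, the compensated counting processes, and a localized Gronwall bound on $\E[w(X_s^{\bpi,\mathbb S})]$, which makes the argument self-contained. For that bound, localizing at exit times from finite sets is the cleaner of your two options, because the compensator integrand is then bounded up to the stopping time.
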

The proof of Theorem~\ref{thm:Dynkin} differs from the semimartingale argument
used by \citet{gao2026reinforcement}, which generalizes \citet{jia2023qlearning} to RL for jump-diffusions. In their jump-diffusion setting,
the grid sample state process is governed by an SDE driven by Brownian motion and Poisson random measures, which directly provides its semimartingale representation and permits an application of the It\^{o}'s formula for their theoretical analysis. 
A CTJMDP, by contrast, is specified through its transition-rate kernel on a general denumerable state space, which need not be a subset of $\mathbb R^d$. Consequently, the grid sample state process of a CTJMDP may not possess a semimartingale structure.
Theorem~\ref{thm:Dynkin}
establishes the Dynkin's formula for $X^{\bpi,\mathbb S}$ directly from the
transition rates.

We next state the second main theoretical result of this paper. 
The following theorem shows that, subject to the terminal and normalization conditions, the martingale property along
trajectories of the grid sample state process uniquely identifies the pair of optimal value function and optimal $q$-function. This result provides the theoretical foundation for our
proposed RL algorithm.
\begin{theorem}\label{thm:optimal-J-q}
Suppose Assumption \ref{asp:transtion-reward-rate} holds with function $w$. Let $\widehat{J^*} \in C_{w}^{1, 0}([0, T] \times \calX)$ and $\widehat{q^*}: \mathbb{M} \mapsto \mathbb{R}$ be measurable, satisfying 
    \begin{align}\label{eq:optimal-J-q-constraints}
        \widehat{J^*}(T, x) = h(x), \quad \int_{\calA} \exp\{\frac{1}{\gamma}\widehat{q^*}(t, x, a)\} \mu(\mathrm{d}a) = 1, \quad \forall\, (t, x) \in [0, T] \times \mathcal{X}.
    \end{align}
    Then, 
    \begin{itemize}
        \item [(i)] If $\widehat{J^*}$ and $\widehat{q^*}$ are respectively the optimal value function and the optimal $q$-function, then given any $\bpi \in \boldsymbol{\Pi}$, for all initial time-state pair $(t, x) \in [0, T] \times \calX$ and any time grid $\mathbb{S}$ on $[t, T]$, the following process 

        \begin{align*}
            \widehat{J^*}(s, X_s^{\bpi,\mathbb{S}}) + \int_t^s \sum_{y\in\mathcal X}\sum_{z\neq y} \rho\bigl(u,y,a_u^{\bpi^\psi,\mathbb{S}},z\bigr)\,\mathrm{d}N_{yz}^{\bpi,\mathbb{S}}(u) + \int_t^s [ r(u, X_{u-}^{\bpi,\mathbb{S}}, a_u^{\bpi,\mathbb{S}}) - \widehat{q^*}(u, X_{u-}^{\bpi,\mathbb{S}}, a_u^{\bpi,\mathbb{S}}) ] \mathrm{d}u 
        \end{align*}
        is an $\{\calF_s^{\mathbb{S}}\}_{s \in[t, T]}$-martingale.

        \item [(ii)] If there exists one $\bpi \in \boldsymbol{\Pi}$ and a time grid $\mathbb{S} = \{0 = t_0 < t_1 < \cdots < t_{K} = T\}$ on $[0, T]$, such that for all initial state  $x \in \calX$, the process
        \begin{align}\label{eq:optimal-martingale}
            \widehat{J^*}(t, X_t^{\bpi,\mathbb{S}}) + \int_0^t \sum_{y\in\mathcal X}\sum_{z\neq y} \rho\bigl(s,y,a_s^{\bpi^\psi,\mathbb{S}},z\bigr)\,\mathrm{d}N_{yz}^{\bpi,\mathbb{S}}(s) + \int_0^t [ r(s, X_{s-}^{\bpi,\mathbb{S}}, a_s^{\bpi,\mathbb{S}}) - \widehat{q^*}(s, X_{s-}^{\bpi,\mathbb{S}}, a_s^{\bpi,\mathbb{S}}) ] \mathrm{d}s
        \end{align}
        is an $\{\calF_t^{\mathbb{S}}\}_{t \in[0, T]}$-martingale, then $\widehat{J^*}$ and $\widehat{q^*}$ are respectively the optimal value function and the optimal $q$-function.
    \end{itemize}
\end{theorem}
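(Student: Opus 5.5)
Both parts rest on Theorem~\ref{thm:Dynkin}, together with the compensator identity for the jump rewards. Under the grid sample process, the counting processes $N_{yz}^{\bpi,\mathbb S}$ have $\{\calF_s^{\mathbb S}\}$-intensity $\mathbf 1_{\{X_{u-}^{\bpi,\mathbb S}=y\}}\lambda(z\mid u,y,a_u^{\bpi,\mathbb S})$. This holds because on each $(t_k,t_{k+1}]$ the action $A_k$ is $\calF_{t_k}^{\mathbb S}$-measurable and frozen, so the process is a Markov chain with rates $\lambda(\cdot\mid u,\cdot,A_k)$. Since $a^{\bpi,\mathbb S}$ is predictable and Assumption~\ref{asp:transtion-reward-rate}(iii) gives integrability (using bounded rates, so that $\E\sup_u w(X_u)<\infty$), it follows that
\[
\int_t^s\sum_{y}\sum_{z\neq y}\rho(u,y,a_u^{\bpi,\mathbb S},z)\,\mathrm dN^{\bpi,\mathbb S}_{yz}(u)-\int_t^s\sum_{z\neq X_{u-}}\rho(u,X_{u-},a_u,z)\lambda(z\mid u,X_{u-},a_u)\,\mathrm du
\]
is a martingale. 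Consequently, the process in the statement differs from
\[
M_s\triangleq \widehat{J^*}(s,X_s)+\int_t^s\big[R(u,X_{u-},a_u)-\widehat{q^*}(u,X_{u-},a_u)\big]\mathrm du
\]
by a martingale, and it suffices to work with $M$.

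For (i), substitute $\widehat{q^*}=q^*=\partial_t J^*+H(\cdot,\cdot,\cdot,J^*)$. This turns the integrand into $-\partial_uJ^*(u,X_{u-})-\sum_y J^*(u,y)\lambda(y\mid u,X_{u-},a_u)$, where the definition of $H$ in \eqref{eq:Hamiltonian} cancels the $R$ term. For $s'\le s$, computing $\E[M_s-M_{s'}\mid\calF_{s'}^{\mathbb S}]$ and applying Theorem~\ref{thm:Dynkin} on $[s',s]$ then gives zero. Here $J^*\in C^{1,0}_w$ by Proposition~\ref{prop:HJB}. Theorem~\ref{thm:Dynkin} is stated with upper limit $T$, but its proof, or simply conditioning the statement at time $s$ and taking differences, gives the version on $[s',s]$. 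The $w$-bounds on $q^*$ are only essential, but this is harmless since the quantity enters only inside $\mathrm du$-integrals.

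For (ii), apply the martingale property together with Theorem~\ref{thm:Dynkin}, reversing the argument. It yields that $\int_0^s g(u,X_{u-},a_u)\,\mathrm du$ is a martingale, where
\[
g(u,x,a)\triangleq \partial_u\widehat{J^*}(u,x)+H(u,x,a,\widehat{J^*})-\widehat{q^*}(u,x,a).
\]
A continuous finite-variation martingale is constant, so $\int_0^s g(u,X_{u-},a_u)\,\mathrm du=0$ a.s. for all $s$, hence $g(u,X_{u-},a_u)=0$ for a.e.\ $u$, a.s.

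The main obstacle is passing from this to $g(u,x,a)=0$ for all $x$ and $\mu$-a.e.\ $a$, for a.e.\ $u$. My plan is as follows. For $u$ in the first interval $(0,t_1]$, the process starts at an arbitrary $x$ and has positive probability $e^{-\int\lambda}\ge e^{-\bar\lambda T}>0$ of staying at $x$, while $A_0\sim\bpi(\cdot\mid 0,x)$ is independent of the state randomness. Since $\mu\ll\bpi$, this gives $\int_0^{t_1}\int_\calA |g(u,x,a)|\,\bpi(\mathrm da\mid0,x)\,\mathrm du=0$, and so $g(u,x,a)=0$ for $\mu$-a.e.\ $a$ and a.e.\ $u\le t_1$. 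For later intervals, I will condition on $\calF_{t_k}$ and use the fact that every state $x$ is reached at $t_k$ only with some probability. This is a gap, since an arbitrary state need not be reachable. To close it, I will instead use the hypothesis "for all initial states $x$" combined with a time-shift of the grid. That device is not available because $\mathbb S$ is fixed and starts at $0$, so I may need to strengthen the argument via the HJB uniqueness. The alternative route is to treat $g$ as $\partial_u\widehat{J^*}+H-\widehat{q^*}$ being $\mu$-a.e.\ independent of $a$ wherever visited.

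Granting $g=0$, the normalization $\int e^{\widehat{q^*}/\gamma}\mathrm d\mu=1$ gives
\[
\partial_t\widehat{J^*}+\gamma\log\int_\calA e^{H(t,x,a,\widehat{J^*})/\gamma}\mu(\mathrm da)=0 .
\]
This is exactly \eqref{eq:HJB}, because the supremum over densities is the log-partition function by the Gibbs variational principle. Together with $\widehat{J^*}(T,\cdot)=h$ and the uniqueness in Proposition~\ref{prop:HJB}, it follows that $\widehat{J^*}=J^*$, and then $\widehat{q^*}=\partial_tJ^*+H=q^*$.
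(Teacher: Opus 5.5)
Your part (i), the reduction to $M$ via the jump-reward compensator, and the deduction that $\int_0^s g(u,X_{u-},a_u)\,\mathrm du$ is a continuous finite-variation martingale and hence vanishes, all coincide with the paper's proof. So does the closing Gibbs-variational/uniqueness step. The genuine gap is the one you flag yourself: you only obtain $g(u,x,a)=0$ for $u\in(0,t_1]$, and none of your proposed repairs works. A time shift of $\mathbb S$ is not covered by hypothesis (ii). The uniqueness in Proposition~\ref{prop:HJB} can only be invoked once \eqref{eq:HJB} is known for a.e.\ $t$ on all of $[0,T)$, which is exactly what is missing. Information about $g$ only at visited points does not give the HJB equation for every state on every grid interval. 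So, as written, part (ii) is not proved.

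The missing idea is that you never need to reach $x$ from some other state. The hypothesis holds for every initial state, so start the process at $x$ itself and let it stay there. Put $E_k^x\triangleq\{X_u^{\bpi,\mathbb S}=x \text{ for all } u\in[0,t_k]\}$. Chaining your one-interval bound over $(t_j,t_{j+1}]$, $j<k$, gives $\Prob_x(E_k^x)\ge e^{-\lambda^*(x)t_k}>0$; the bound $e^{-\bar\lambda T}$ you wrote already covers the whole horizon. The event $E_k^x$ is determined by the path on $[0,t_k]$, which is independent of $U_k$. Hence, conditional on $E_k^x$, the action $A_k=\Phi(t_k,x,U_k)$ has law $\bpi(\cdot\mid t_k,x)$. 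Given $A_k=a$, the process stays at $x$ throughout $(t_k,t_{k+1}]$ with probability $P_k(a)\ge e^{-\lambda^*(x)(t_{k+1}-t_k)}>0$. Therefore
\[
0=\int_{t_k}^{t_{k+1}}\E_x\bigl[|g(u,X_{u-},a_u)|\bigr]\,\mathrm du\ \ge\ \Prob_x(E_k^x)\int_{t_k}^{t_{k+1}}\int_{\calA}|g(u,x,a)|\,P_k(a)\,\bpi(a\mid t_k,x)\,\mu(\mathrm da)\,\mathrm du .
\]
Because $\bpi(\cdot\mid t_k,x)\sim\mu$, this forces $g(\cdot,x,\cdot)=0$ $\mathrm du\otimes\mu$-a.e.\ on $(t_k,t_{k+1}]\times\calA$, for every $k$ and every $x$. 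This is precisely the paper's argument, phrased there by contradiction through the sets $S_k^y$. It is also the mechanism behind the claim that one fixed grid on $[0,T]$ suffices: a pure-jump path can remain frozen at its initial state up to any grid interval with positive probability, which a diffusion cannot do.
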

Note that the augmented running reward rate $R$, introduced in
\eqref{eq:augmented-reward} for theoretical analysis, cannot be
evaluated from sample trajectories because its jump-reward
component depends on the unknown transition rates. Accordingly, for algorithm design, the martingale condition in
Theorem~\ref{thm:optimal-J-q} is formulated in terms of the cumulative
running and jump rewards, which are directly observable along sample
trajectories.

The two parts of Theorem~\ref{thm:optimal-J-q} provide the necessity and
sufficiency underlying our learning procedure. 
Part~(i) shows that $(J^*,q^*)$ satisfies the martingale condition, which provides the foundation for designing $q$-learning algorithm as we will see later. 
 Part~(ii) shows that a single policy and a fixed decision grid on the entire horizon $[0,T]$ suffices to
uniquely identify $(J^*,q^*)$, provided that the martingale condition
holds for every initial state.

Theorem~\ref{thm:optimal-J-q} differs from existing results established in diffusion and jump-diffusion settings \citep{jia2023qlearning, gao2026reinforcement} in two main aspects. First, our result incorporates the jump rewards inherent to CTJMDPs (such as the network dynamic pricing application considered later), which are not considered in these prior studies. Second, Part~(ii) requires significantly less grid coverage than the corresponding results in \citep{jia2023qlearning, gao2026reinforcement}.
Their formulation requires the martingale condition to hold for every initial time-state pair $(t,x)$ and every possible grid on $[t,T]$. In contrast, our condition is far less restrictive, requiring only one fixed grid on $[0,T]$ across all initial states. This distinction stems directly from the path properties of the underlying processes. The grid sample state process of a CTJMDP remains constant between jump events, whereas the sample state of a diffusion process evolves continuously. In our framework, under the stability condition $\lambda^*(x)<\infty$, the pure-jump grid sample state process initialized at $x$ has a strictly positive probability of remaining in state $x$ up to any arbitrary time in $[0,T]$. This property is the key mechanism that allows us to establish Part~(ii) under relaxed grid conditions.

\subsection{$q$-learning Algorithm for CTJMDPs}
We next develop the $q$-learning
algorithm for the entropy-regularized RL problem
\eqref{eq:value-function-J-given-pi}--\eqref{eq:optimal-value-function-J}.
The martingale characterization in Theorem \ref{thm:optimal-J-q} motivates
the simultaneous approximation of the optimal value function and $q$-function.
We consider parametric families
$\{J^\theta:\theta\in\Theta\}$ and
$\{q^\psi:\psi\in\Psi\}$ satisfying
\begin{align}\label{eq:approx-condition}
    J^\theta(T,x)
    &=h(x),\qquad
    \int_{\mathcal{A}}
    \exp\{\frac{1}{\gamma}q^\psi(t,x,a)\}
    \mu(\mathrm{d}a)
    =1,
    \qquad (t,x)\in[0,T]\times\mathcal{X}.
\end{align}
The normalization condition in \eqref{eq:approx-condition} ensures that $\bpi^\psi(a\mid t,x) \triangleq \exp\{\frac{1}{\gamma}q^\psi(t,x,a)\}$
defines a policy density with respect to $\mu$. If the state space is not a subset of $\mathbb{R}^d$, we can map $x \in \mathcal{X}$ to $x' \in \mathbb{R}^d$ before passing it to the parametric functions (e.g. neural nets) $J^{\theta}$ and $q^{\psi}$.
For implementation, we fix $\Delta t>0$, let
$K\triangleq\lceil T/\Delta t\rceil$ and $t_k\triangleq k\Delta t$ for $k=0,\ldots,K-1$, and define $\mathbb{S} \triangleq \{0=t_0<t_1<\cdots<t_K=T\}$. Note that one can also consider a non-uniform time grid. 

To design the updating rules for \(\theta\) and \(\psi\) based on Theorem~\ref{thm:optimal-J-q}, we utilize the martingale orthogonality condition, which states that a process \(M\) is a (square-integrable) \(\{\mathcal{F}_t\}_{t\in [0, T]}\)-martingale if and only if
\(
\mathbb{E}[\int_0^T H_t\, \mathrm{d}M_t] = 0
\)
holds for any predictable process \(H\) (called a test function). A popular approach is to choose
\begin{align}\label{eq:choice-xi-zeta}
\xi_t = \nabla_\theta J^\theta(t, X_{t-}^{\bpi^\psi,\mathbb{S}}), \qquad
\zeta_t = \nabla_\psi q^\psi(t, X_{t-}^{\bpi^\psi,\mathbb{S}}, a_t^{\bpi^\psi,\mathbb{S}})
\end{align}
as two sets of test functions. The martingale orthogonality conditions then reduce to
\begin{align}\label{eq:system-equation}
    \mathbb{E}\left[
        \int_0^T \xi_t\,
        \mathrm{d}M_t^{\theta,\psi}
    \right]
    =0, \qquad
    \mathbb{E}\left[
        \int_0^T \zeta_t\,
        \mathrm{d}M_t^{\theta,\psi}
    \right]
    =0,
\end{align}
where
\begin{align*}
    \mathrm{d} M_t^{\theta,\psi} \triangleq \mathrm{d}J^\theta(t,X_t^{\bpi^\psi,\mathbb{S}}) &+ \sum_{y\in\mathcal X}\sum_{z\neq y} \rho\bigl(t,y,a_t^{\bpi^\psi,\mathbb{S}},z\bigr)\,\mathrm{d}N_{yz}^{\bpi^\psi,\mathbb{S}}(t)
    \\
    &+ [r(t,X_{t-}^{\bpi^\psi,\mathbb{S}},a_t^{\bpi^\psi,\mathbb{S}})-q^\psi(t,X_{t-}^{\bpi^\psi,\mathbb{S}},a_t^{\bpi^\psi,\mathbb{S}})]\mathrm{d}t. 
\end{align*}
In \eqref{eq:choice-xi-zeta},  $\xi_t$ is chosen so that the update of $\theta$ corresponds to TD(0)
in discrete-time RL, while $\zeta_t$ is chosen because the identity
$\bpi^\psi(a\mid t,x) = \exp\{\frac{1}{\gamma}q^\psi(t,x,a)\}$ yields
$\zeta_t=\gamma\nabla_\psi\log\bpi^\psi(a_t^{\bpi^\psi,\mathbb{S}}\mid t,X_{t-}^{\bpi^\psi,\mathbb{S}})$, connecting
the update of $\psi$ to policy-gradient methods.
Note that the choices of $\xi_t$ and $\zeta_t$ are not
restricted to those specified in \eqref{eq:choice-xi-zeta}. Many alternative predictable test processes can be used, with different choices generally leading to
different learning algorithms; See \cite{jia2023qlearning} for details. 
We solve the system of equations in \eqref{eq:system-equation} by stochastic approximation and the update rules for $\theta$ and $\psi$ are given by
\begin{align}\label{eq:updating-rule}
    \theta
    \leftarrow
    \theta
    +
    \alpha_\theta
    \int_0^T
    \xi_t\,\mathrm{d}M_t^{\theta,\psi},
    \qquad
    \psi
    \leftarrow
    \psi
    +
    \alpha_\psi
    \int_0^T
    \zeta_t\,\mathrm{d}M_t^{\theta,\psi},
\end{align}
where $\alpha_\theta$ and $\alpha_\psi$ are the corresponding learning
rates.

We next discuss the computation of the integral in the update \eqref{eq:updating-rule}.
Unlike the controlled diffusion processes in
\citet{jia2023qlearning}, we exploit the piecewise-constant sample paths of
the state and action processes in our CTJMDP setting. Fix a realization
$\{(x_t,a_t): t\in [0, T]\}$ with state jump times 
$0<\tau_1<\cdots<\tau_L\leq T$. For each $\ell=1,\ldots,L$, let $\rho_\ell$
be the jump reward observed at time $\tau_\ell$.
We augment the decision grid with these realized jump times and write
\begin{align*}
    \mathbb{S}\cup\{\tau_1,\ldots,\tau_L\}
    =
    \{0=s_0<s_1<\cdots<s_M=T\},
\end{align*}
where duplicate points are removed. For each $m=0,\ldots,M-1$, let $x_m$ and $a_m$ denote the constant
values of the state and action processes on $(s_m,s_{m+1})$, respectively.
Let $\Delta C_m$ denote the running reward accumulated over this interval,
and set $\Delta_m\triangleq s_{m+1}-s_m$.
Define
\begin{align*}
    H^\theta(t,x,a)
    &\triangleq \nabla_\theta J^\theta(t,x),\qquad
    H^\psi(t,x,a)
    \triangleq \nabla_\psi q^\psi(t,x,a).
\end{align*}
Then, for $v\in\{\theta,\psi\}$, the corresponding pathwise increment in
\eqref{eq:updating-rule} is approximated by
\begin{align}
\mathcal{I}_{v} & \triangleq \int_0^T
H^v(t,x_{t-},a_t)
\mathrm{d}J^\theta(t,x_t) +
\sum_{y\in\mathcal X}\sum_{z\neq y} \int_0^T H^v(t,x_{t-},a_{t})  \rho(t,y,a_t,z)\,\mathrm{d}N_{yz}(t)\\
&\hspace{5cm}+ \int_0^T
H^v(t,x_{t-},a_t)[
    r(t,x_{t-},a_t)
    -q^\psi(t,x_{t-},a_t)]\mathrm{d}t
\nonumber\\
&\approx 
\sum_{\ell=1}^{L}
H^v(\tau_\ell,x_{\tau_\ell-},a_{\tau_\ell})
[J^\theta(\tau_\ell,x_{\tau_\ell}) - J^\theta(\tau_\ell,x_{\tau_\ell-}) + \rho_{\tau_l}]
\nonumber\\
&\hspace{1em}+
\sum_{m=0}^{M-1}
H^v(s_m,x_m,a_m)
[
    J^\theta(s_{m+1},x_m)
    -J^\theta(s_m,x_m) 
    +\Delta C_m
    -q^\psi(s_m,x_m,a_m)\Delta_m
].
\label{eq:integral-approximation}
\end{align}
This construction incorporates every realized state jump and grid-based
action change and uses the observed cumulative reward without temporal
quadrature. Consequently, the remaining approximation error
arises only from the time variation of the integrands within each interval.
With all the above in
place, we present our $q$-learning algorithm in Algorithm~\ref{alg:q-learning}.

\begin{remark}
Although our \(q\)-learning algorithm requires a decision grid to implement the randomized policy, its role differs fundamentally from conventional approaches that first discretize time and then apply discrete-time RL algorithms. The latter typically rely on a fixed, uniform grid throughout training, rendering their learning targets grid-dependent; while a finer grid provides a better continuous-time approximation, it significantly increases the computational overhead. By contrast, our learning targets \((J^*,q^*)\) are inherently grid-independent, continuous-time objectives. Consequently, our algorithmic implementation accommodates flexible, non-uniform time discretizations and seamlessly incorporates the random transition times of CTJMDPs from the collected data. This reduces discretization errors and yields more efficient RL algorithms.

\end{remark}

\begin{algorithm}[h]
\caption{Episodic $q$-Learning Algorithm}
\label{alg:q-learning}
\begin{algorithmic}[1]

\Require Initial state $x_0$, horizon $T$, decision grid
$\mathbb{S}=\{0=t_0<t_1<\cdots<t_K=T\}$, number of episodes $N$,
temperature parameter $\gamma$, and learning rates
$\alpha_{\theta}$ and
$\alpha_{\psi}$. 
Approximators $J^\theta$ and $q^\psi$ satisfying
\eqref{eq:approx-condition}. 
Policy $\bpi^\psi(a\mid t,x) \triangleq \exp\{\frac{1}{\gamma}q^\psi(t,x,a)\}$

\State \textbf{Simulator:} Given
$(t_k,t_{k+1},X_{t_k},A_k)$, simulate the system over
$(t_k,t_{k+1}]$ 
and return the state at $t_{k+1}$, all state jumps and the
corresponding jump rewards, and the cumulative running reward between successive grid or jump times

\State Initialize $\theta_1$ and $\psi_1$

\For{$j=1,\ldots,N$}

    \State Set $X_0\gets x_0$, $C_0\gets0$, and
    $\mathcal{T}_{J}\gets\varnothing$

    \For{$k=0,\ldots,K-1$}

        \State Sample
        $A_k\sim\bpi^{\psi_j}(\cdot\mid t_k,X_{t_k})$
        and set $A_t\gets A_k$ for $t\in(t_k,t_{k+1}]$

        \State Simulate the system over $(t_k,t_{k+1}]$ and observe
        $X_{t_{k+1}}$

        \State Record all state jump times $\tau_{\ell}$ and post-jump states in
        $(t_k,t_{k+1}]$, and add the jump times to $\mathcal{T}_{J}$

        \State Record the cumulative running reward $\Delta C_m$ between
        successive grid or jump times in $[t_k,t_{k+1}]$
        \State Record the jump reward $\rho_\ell$ observed at each jump time
        $\tau_\ell$ in $(t_k,t_{k+1}]$

    \EndFor

    \State Let
    $\mathcal{T}_{J}=\{\tau_1<\cdots<\tau_L\}$ and construct
    \[
        \mathbb{S}\cup\mathcal{T}_{J}
        =
        \{0=s_0<s_1<\cdots<s_M=T\},
    \]
    with duplicate points removed

    \State For each $m=0,\ldots,M-1$, identify the constant state and
    action $(x_m,a_m)$ on $(s_m,s_{m+1})$ and set
    \[
        \Delta_m\gets s_{m+1}-s_m
    \]

    \State Evaluate
    $\mathcal{I}_{\theta}^{(j)}$ and
    $\mathcal{I}_{\psi}^{(j)}$ from
    \eqref{eq:integral-approximation} using
    $(\theta_j,\psi_j)$ and the recorded trajectory

    \State Update
    \[
        \theta_{j+1}
        \gets
        \theta_j+\alpha_{\theta} \cdot \mathcal{I}_{\theta}^{(j)},
        \qquad
        \psi_{j+1}
        \gets
        \psi_j+\alpha_{\psi} \cdot \mathcal{I}_{\psi}^{(j)}
    \]

\EndFor

\end{algorithmic}
\end{algorithm}

\section{Case Study: Network Dynamic Pricing}\label{sec:dynamic-pricing}

In this section, we illustrate the application of our proposed RL framework for CTJMDPs. 
We consider the multi-product network dynamic pricing problem studied in \cite{gallego1997multiproduct} in the context of an airline network.
Let $\mathcal{I} \triangleq \{1, 2, \ldots, m\}$ denote the set of flight legs and $\mathcal{J} \triangleq \{1, 2, \ldots, n\}$ the set of origin-destination itineraries offered over a finite booking horizon $[0,T]$. Each flight leg $i\in\mathcal{I}$ has an initial capacity $c_i$, and each itinerary $j\in\mathcal{J}$ follows a fixed path and consumes one unit of capacity on every leg along that path. At any time, the decision maker observes the remaining leg capacities and sets the itinerary prices. 
Lower prices generate demand more rapidly but consume capacity that may be valuable for other itineraries, whereas higher prices preserve capacity at the cost of reduced demand. Because itineraries share flight legs, their pricing decisions are coupled through the remaining capacities.

Let $A=[a_{ij}]_{m \times n}$ be the leg-itinerary incidence matrix, where $a_{ij}=1$ if itinerary $j$ uses leg $i$ and $a_{ij}=0$ otherwise. We model the remaining leg capacities as the state $x=(x_1, x_2, \ldots, x_m)^{\top}$, with state space $\mathcal{X}=\prod_{i=1}^m\{0,1,\ldots,c_i\}.$
The action is a price vector $\boldsymbol{p}=(p_1, p_2, \ldots, p_n)^{\top}\in\mathcal{A}$, where $\mathcal{A}\subseteq\mathbb{R}_+^{n}$ denotes the set of allowable prices. Requests for itinerary $j$ arrive according to a Poisson process with rate $\lambda_j(\boldsymbol{p})$. Let $A_j$ denote column $j$ of $A$. A sale of itinerary $j$ changes the state from $x$ to $x-A_j$ and can occur only if every leg on the itinerary has positive remaining capacity. Thus, the transition rates are
\begin{align*}
    \lambda(x-A_j\mid t,x,\boldsymbol{p})
    =\lambda_j(\boldsymbol{p})\boldsymbol{1}_{\{x\geq A_j\}},
    \qquad j\in\mathcal{J}, 
\end{align*}
and $\lambda(x\mid t,x,\boldsymbol{p}) =
-\sum_{j=1}^n
\lambda_j(\boldsymbol{p})
\boldsymbol{1}_{\{x\geq A_j\}}$. 
Since revenue is earned only upon a sale, the running reward rate and the terminal reward are zero.
The jump reward function is given by
\[
    \rho(t,x,\boldsymbol{p},x-A_j)=p_j,
    \qquad j\in\mathcal J,\quad x\geq A_j.
\]
In the RL setting, the decision maker does not know the demand functions but observes the realized sales, the corresponding revenues, and the resulting changes in the remaining leg capacities.

For both numerical instances considered below, the booking horizon is
normalized to $T=1$. Requests for different itineraries arrive according
to independent Poisson processes, and demand is time homogeneous and
separable across itineraries. Specifically, the demand function for itinerary $j$ is given by
\begin{align}\label{eq:demand-function}
    \lambda_j(\boldsymbol{p}) \equiv \lambda_j(p_j)
    =
    \lambda_j^0 \exp\{-\epsilon_j^0(\frac{p_j}{p_j^0} -1)\},
\end{align}
where $\lambda_j^0$, $p_j^0$, and $\epsilon_j^0$ denote the reference demand
rate, reference price, and price elasticity, respectively. Under the above formulation, one can readily check that Assumption~\ref{asp:transtion-reward-rate} is satisfied. In particular, since the transition rates are conservative in the sense that $\sum_{y \in \calX} \lambda(y \mid t, x, a) \equiv 0$, and the running reward rates and terminal reward are zero, one may take $w(x)\equiv1$, $c>0$, $b=0$, and $M_1=
\sum_{j=1}^n \lambda_j^0p_j^0 
e^{\epsilon_j^0-1} / \epsilon_j^0$.
\paragraph{Benchmarks.} We consider the following benchmarks for comparison.
\begin{itemize}
    \item \textit{Dynamic programming (DP).} 
    The DP benchmark is obtained from a time-discretized approximation of the
    continuous-time pricing model. Specifically, given
    $\Delta t'>0$, let $K' \triangleq \lceil \frac{T}{\Delta t'} \rceil$ and
    $t_k'=k\Delta t'$.
    In each interval $[t_k',t_{k+1}')$, the price vector
    $\boldsymbol p$ is held fixed. Conditional on the state being $x$ at
    $t_k'$, a sale of itinerary $j$ occurs with probability
    $\lambda_j(p_j)\boldsymbol{1}_{\{x\geq A_j\}}\Delta s$, and no sale occurs
    with the remaining probability. The optimal value function
    $V_{\Delta t'}^*$ of the resulting discrete-time model satisfies
    \begin{align*}
    &V_{\Delta t'}^*(t_k',x)
    &&=&&V_{\Delta t'}^*(t_{k+1}',x)
    +\Delta t'
    \max_{\boldsymbol p\in\mathbb{R}_+^n}
    \sum_{j=1}^n
    \lambda_j(p_j)\boldsymbol{1}_{\{x\geq A_j\}}
    \big[
    p_j+V_{\Delta t'}^*(t_{k+1}',x-A_j)
         -V_{\Delta t'}^*(t_{k+1}',x)
    \big],\\
    &V_{\Delta t'}^*(T,x)&&=&& 0.
    \end{align*}
    For separable exponential demand functions of the form in \eqref{eq:demand-function}, 
    the optimal price is given by 
    \[
    p_{j,k}^*(x)
    =
    V_{\Delta t'}^*(t_{k+1}',x)
    -
    V_{\Delta t'}^*(t_{k+1}',x-A_j)+\frac{p_j^0}{\epsilon_j^0}.
    \]
    Hence, no discretization of the price space is required. For sufficiently
    small $\Delta t'$, $V_{\Delta t'}^*(0,c)$ provides a numerical
    approximation to $V^*(0,c)$, the optimal value of the original
    continuous-time pricing problem.

    \item \textit{Deterministic upper bound (UB).}
    The deterministic problem replaces stochastic sales with their expected
    rates and solves
    \begin{align*}
        \sup_{\boldsymbol p(\cdot)}
        \quad
        \int_0^T
        \boldsymbol{p}(t)^{\top}\boldsymbol{\lambda}\bigl(\boldsymbol{p}(t)\bigr)\,\mathrm dt
        \qquad \text{s.t.}\quad
        \int_0^T
        A\boldsymbol\lambda(\boldsymbol p(t))\,\mathrm dt
        \leq c,
    \end{align*}
    where
    \(
        \boldsymbol\lambda(\boldsymbol p)
        =
        \bigl(
        \lambda_1(p_1),\ldots,\lambda_n(p_n)
        \bigr)^\top.
    \)
    Its optimal value provides an upper bound on the optimal expected
    revenue of the stochastic problem.

    \item \textit{Fluid pricing (FP).}
    The FP policy follows the optimal price path obtained from the
    deterministic problem. An itinerary is made unavailable once any of its required legs no longer has sufficient remaining capacity.

    \item \textit{Fluid pricing with booking limits (FP-BL).}
    The FP-BL policy follows the same deterministic price path as the FP policy
    but imposes a booking limit for each itinerary. Specifically, under an optimal deterministic price path $\boldsymbol p^d(t)$, the booking limit for itinerary $j$ is set to
    \(
        \lfloor
        \int_0^T
        \lambda_j\bigl(p_j^d(t)\bigr)\,\mathrm dt
        \rfloor
    \).
    Itinerary $j$ is made unavailable once its booking limit is reached.
\end{itemize}
The benchmarks above require knowledge of the demand functions, whereas
Algorithm~\ref{alg:q-learning} uses only observed state transitions and
revenues. The FP and FP-BL policies are heuristics constructed from the
optimal solution of the deterministic problem.
\citet{gallego1997multiproduct} shows that both policies are asymptotically
optimal when the initial capacities and demand rates increase
proportionally, and their revenues accordingly approach the deterministic
upper bound as the scale increases.

\paragraph{Implementation of Algorithm~\ref{alg:q-learning}.}
For both instances, we employ two separate fully connected neural networks,
$f_{\mathrm{critic}}^\theta$ and $f_{\mathrm{actor}}^\psi$, to construct
the approximators $J^\theta$ and $q^\psi$, respectively. Both networks use ReLU activations in the hidden layers, whose number and width are specified for each instance below. The parameters of each network are updated using a
separate Adam optimizer. Both networks take the normalized
$(m+1)$-dimensional time-state vector
\(
    (\tilde t,\tilde x)
    =
    (
        1-\frac{t}{T},
        \frac{x_1}{c_1},\ldots,\frac{x_m}{c_m}
    )
\)
as input. The critic network outputs a scalar, and the value-function
approximator is defined by
\begin{align*}
    J^\theta(t,x)
    =
    (1-\frac{t}{T}) \cdot 
    f_{\mathrm{critic}}^\theta(\tilde t,\tilde x).
\end{align*}
This construction ensures
that $J^\theta(T,x)=0=h(x)$ for every $x\in\mathcal X$. The actor network outputs two positive
$n$-dimensional vectors,
$\mu^\psi(\tilde t,\tilde x)$ and
$k^\psi(\tilde t,\tilde x)$. Conditional on $(t,x)$, the itinerary
prices are sampled independently according to
\begin{align}
\label{eq:gamma-parametrization}
    p_j
    \sim
    \operatorname{Gamma}
    \left(
        \frac{k_j^\psi(\tilde t,\tilde x)}{\gamma},
        \frac{k_j^\psi(\tilde t,\tilde x)}
        {\gamma\mu_j^\psi(\tilde t,\tilde x)}
    \right),
    \qquad j=1,\ldots,n,
\end{align}
where the Gamma distribution is parameterized by its shape and rate.
We choose the Gamma distribution because its support on $[0,\infty)$
ensures that sampled prices are nonnegative, while its two parameters allow
flexible control over the mean and relative dispersion. Under
\eqref{eq:gamma-parametrization}, the mean and variance are given by
$\mu_j^\psi$ and
$\gamma(\mu_j^\psi)^2/k_j^\psi$, respectively. Thus, a smaller temperature
parameter $\gamma$ yields a policy that is more concentrated around its
mean, consistent with the behavior of the optimal policy characterized in
\eqref{eq:optimal-policy}.
Let $\bpi_j^\psi(\cdot\mid t,x)$ denote the resulting marginal price density
for itinerary $j$. The joint policy density and $q$-function approximator are
defined by
\begin{align*}
    \bpi^\psi(\boldsymbol p\mid t,x)
    =
    \prod_{j=1}^n
    \bpi_j^\psi(p_j\mid t,x),
    \qquad
    q^\psi(t,x,\boldsymbol p)
    =
    \gamma
    \sum_{j=1}^n
    \log\bpi_j^\psi(p_j\mid t,x).
\end{align*}
This construction ensures that $q^\psi$ satisfies the normalization
condition in \eqref{eq:approx-condition}.

\subsection{A Small-Network Example}\label{ssec:small-pricing-example}
We first consider a two-leg network with three itineraries. The first two itineraries each use a single leg, whereas the third uses both legs. The initial capacity vector and the leg--itinerary incidence matrix are given by $c=(12,10)^\top$ and $A=
\begin{bmatrix}
    1&0&1\\
    0&1&1
\end{bmatrix}$.
The demand parameters are reported in
Table~\ref{tab:small-pricing-parameters}.
\begin{table}[htbp]
    \centering
    \caption{Itinerary and demand parameters for the example in Section~\ref{ssec:small-pricing-example}}
    \label{tab:small-pricing-parameters}
    \small
    \begin{tabular}{@{}cccc@{}}
        \toprule
        Itinerary $j$
        & $\lambda_j^0$
        & $\epsilon_j^0$
        & $p_j^0$\\
        \midrule
        1 & 30 & 4.0 & 20\\
        2 & 15 & 4.0 & 30\\
        3 & 50 & 4.0 & 45\\
        \bottomrule
    \end{tabular}
\end{table}

To implement Algorithm~\ref{alg:q-learning} for this instance, we configure
both the actor and critic networks with two hidden layers, each of width 32. 
We set the initial state to $x_0=c$ and the time-grid spacing to
$\Delta t=0.01$. The actor and critic
learning rates are $\alpha_\psi=3\times10^{-5}$ and
$\alpha_\theta=5\times10^{-4}$, respectively, and the temperature
parameter is $\gamma=1\times 10^{-2}$.

During training, we periodically evaluate the current learned policy by averaging
the total revenue over 10,000 independent simulation runs.
Figure~\ref{fig:small-pricing-example} reports the average revenues of the
policies generated by Algorithm~\ref{alg:q-learning} throughout the
learning process, together with reference lines for the
deterministic upper bound, the DP optimal value, and the revenues of FP
and FP-BL. 
Here, the DP optimal value is computed with a time-discretization step of \(\Delta s = 10^{-4}\), yielding \(V_{0.0001}^*(0,c) = 709.883\).
Table~\ref{tab:small-pricing-example} reports the performance of
Algorithm~\ref{alg:q-learning}, FP, and FP-BL relative to the DP optimal
value.
The learned policy outperforms FP and FP-BL by $8.87\%$ and $18.91\%$,
respectively, while attaining performance within $2.39\%$ of the DP
optimal value. These results demonstrate that the proposed 
algorithm can learn a near-optimal pricing policy.
Note that although the FP and FP-BL policies are asymptotically optimal, they exhibit
performance gaps of $10.33\%$ and $17.90\%$, respectively, relative to the
DP optimal value in this instance.
These gaps arise from the small capacities and resource coupling in this
instance, which make the opportunity costs of capacity sensitive to
realized demand.
FP retains fixed prices, whereas FP-BL additionally imposes fixed
itinerary-level allocations, so neither adapts these decisions to the evolving system state. In contrast, our algorithm learns
a state-dependent pricing policy that responds to the remaining time and
capacities, thereby achieving performance close to the DP optimal value.

\begin{figure}
\centering
\caption{Average revenue of Algorithm \ref{alg:q-learning} over episodes for the
example in Section~\ref{ssec:small-pricing-example}}
\label{fig:small-pricing-example}
\begin{tikzpicture}
    \begin{axis}[
        xlabel={Episode},
        ylabel={Average Revenue},
        xtick={0,200000,400000,600000,800000,1000000},
        xticklabels={0,2,4,6,8,10},
        xtick scale label code/.code={$\times 10^5$},
        ytick={400,500,582.763, 636.519, 709.883, 741.549},
        yticklabel style={
            /pgf/number format/fixed,
            /pgf/number format/precision=3,
            /pgf/number format/1000 sep={,}
        },
        scaled y ticks=false,
        xmin=0, xmax=1000000,
        ymin=450, ymax=760,
        legend pos=south east,
        legend style={
            font=\footnotesize,
            inner sep=1pt,
            row sep=-1pt,
            column sep=5pt,
            cells={anchor=west}
        },
        legend image post style={xscale=0.8},
        grid=minor
    ]

    \addplot[
        thick,
        black,
        densely dotted
    ]
    coordinates {(0,741.549) (1000000,741.549)};
    \addlegendentry{UB}
    
    \addplot[
        thick,
        black,
        dashed
    ]
    coordinates {(0,709.883) (1000000,709.883)};
    \addlegendentry{DP}
    
    \addplot[
        thick,
        red!75!black,
        dashdotted
    ]
    coordinates {(0,636.519) (1000000,636.519)};
    \addlegendentry{FP}
    
    \addplot[
        thick,
        teal!75!black,
        dash pattern=on 7pt off 2pt on 2pt off 2pt
    ]
    coordinates {(0,582.763) (1000000,582.763)};
    \addlegendentry{FP-BL}
    
    \addplot[
        very thick,
        blue,
        solid
    ]
    table[col sep=comma, x=x, y=y]
        {data/small_pricing_example.csv};
    \addlegendentry{Algorithm~\ref{alg:q-learning}}
    \end{axis}
\end{tikzpicture}
\end{figure}
\begin{table}[htbp]
    \centering
    \caption{Numerical results for the example in Section~\ref{ssec:small-pricing-example}}
    \label{tab:small-pricing-example}
    \small
    \begin{tabular}{@{}lccr@{}}
        \toprule
        Method
        & Average revenue
        & 99\%-CI ($\pm$)
        & Gap from DP\\
        \midrule
        DP
        & 709.883
        & --
        & 0.00\%\\
        Algorithm~\ref{alg:q-learning}
        & 692.951
        & 0.811
        & 2.39\%\\
        FP
        & 636.519
        & 2.599
        & 10.33\%\\
        FP-BL
        & 582.763
        & 2.392
        & 17.90\%\\
        \bottomrule
    \end{tabular}
\end{table}
\subsection{A Large-Network Example}\label{ssec:large-pricing-example}
Next, we evaluate Algorithm \ref{alg:q-learning} on an airline pricing problem
with a large state space and a high-dimensional continuous action space. The
instance is based on Example 1 of \citet{gallego1997multiproduct}, which
comprises a six-node network with 11 flight legs and 18 itineraries.
Figure \ref{fig:pricing-network} depicts the network and the initial leg capacities.
The reference demand rate is set to
$\lambda_j^0=30$ for every itinerary, and the remaining demand parameters
and itinerary paths are reported in Table
\ref{tab:pricing-parameters}.
This instance corresponds to the scale-$0.1$ version of Example~1 in
\citet{gallego1997multiproduct}, where scale-$\kappa$ means that both the
initial leg capacities and reference demand rates are set to $\kappa$ times
their values in the base instance, with all other model parameters unchanged.
We focus on the scale-0.1 instance
because the FP-BL and FP policies are asymptotically optimal as the scale $\kappa$ grows, so we use a small-$\kappa$ instance to demonstrate the advantages of our algorithm in a non-asymptotic
regime.
\begin{figure}
    \centering
    \caption{Airline network for the example in Section~\ref{ssec:large-pricing-example}}
    \label{fig:pricing-network}
    \begin{tikzpicture}[
        >=Latex,
        airport/.style={
            circle,
            draw,
            minimum size=8mm,
            inner sep=0pt,
            font=\small
        },
        leg/.style={
            ->,
            thin
        },
        capacity/.style={
            fill=white,
            inner sep=1.2pt,
            font=\scriptsize
        }
    ]
        \node[airport] (n1) at (0,0) {1};
        \node[airport] (n2) at (2.9,2.5) {2};
        \node[airport] (n3) at (2.9,-2.5) {3};
        \node[airport] (n4) at (5.8,0) {4};
        \node[airport] (n5) at (5.8,-2.8) {5};
        \node[airport] (n6) at (7.0,2.8) {6};

        \draw[leg]
            (n1) -- node[capacity,above left] {30} (n2);
        \draw[leg]
            (n1) -- node[capacity,below left] {10} (n3);
        \draw[leg]
            (n1) --node[capacity,above,pos=0.75,yshift=3pt]{10} (n6);

        \draw[leg]
            (n2) to[bend left=10]
            node[capacity,right,xshift=2pt] {20} (n3);
        \draw[leg]
            (n3) to[bend left=10]
            node[capacity,left, xshift=-2pt] {20} (n2);

        \draw[leg]
            (n2) -- node[capacity,above right,pos=0.6] {40} (n4);
        \draw[leg]
            (n3) -- node[capacity,below right, pos=0.4] {20} (n4);
        \draw[leg]
            (n5) -- node[capacity,right,pos=0.55, xshift=2pt] {10} (n2);

        \draw[leg]
            (n3) to[bend left=8]
            node[capacity,above,yshift=2pt] {10} (n5);
        \draw[leg]
            (n5) to[bend left=8]
            node[capacity,below,yshift=-2pt] {10} (n3);

        \draw[leg]
            (n4) -- node[capacity,right, xshift=2pt] {30} (n6);
    \end{tikzpicture}
\end{figure}
\begin{table}
    \centering
    \caption{Itinerary and demand parameters for example in Section~\ref{ssec:large-pricing-example}}
    \label{tab:pricing-parameters}
    \small
    \begin{tabular}{@{}ccccc@{}}
        \toprule
        Itinerary $j$
        & O--D pair
        & Path
        & $\epsilon_j^0$
        & $p_j^0$ \\
        \midrule
         1 & $1$--$2$ & $1$--$2$             & 1.0 & 220 \\
         2 & $1$--$3$ & $1$--$3$             & 1.2 & 220 \\
         3 & $1$--$4$ & $1$--$2$--$4$        & 2.0 & 400 \\
         4 & $1$--$5$ & $1$--$3$--$5$        & 1.0 & 250 \\
         5 & $1$--$6$ & $1$--$6$             & 0.8 & 200 \\
         6 & $2$--$3$ & $2$--$3$             & 1.0 & 230 \\
         7 & $2$--$4$ & $2$--$4$             & 0.9 & 200 \\
         8 & $2$--$5$ & $2$--$3$--$5$        & 2.0 & 200 \\
         9 & $2$--$6$ & $2$--$4$--$6$        & 1.0 & 200 \\
        10 & $3$--$2$ & $3$--$2$             & 1.0 & 200 \\
        11 & $3$--$4$ & $3$--$4$             & 2.0 & 230 \\
        12 & $3$--$5$ & $3$--$5$             & 2.0 & 120 \\
        13 & $3$--$6$ & $3$--$4$--$6$        & 2.0 & 150 \\
        14 & $4$--$6$ & $4$--$6$             & 1.0 & 150 \\
        15 & $5$--$2$ & $5$--$2$             & 1.0 & 200 \\
        16 & $5$--$3$ & $5$--$3$             & 2.0 & 150 \\
        17 & $5$--$4$ & $5$--$3$--$4$        & 1.0 & 160 \\
        18 & $5$--$6$ & $5$--$3$--$4$--$6$   & 1.0 & 230 \\
        \bottomrule
    \end{tabular}
\end{table}

For this instance, we implement Algorithm~\ref{alg:q-learning} as follows. Both the actor and critic networks have two hidden layers of width 128. We initialize the state at \(x_0 = c\) and set the time-grid spacing to \(\Delta t = 0.01\). The learning rates are \(\alpha_{\theta} = 3 \times 10^{-6}\) for the actor and \(\alpha_{\phi} = 3 \times 10^{-4}\) for the critic, and the temperature parameter is set to \(\gamma = 1 \times 10^{-2}\).
\begin{figure}
\centering
\caption{Average revenue of Algorithm \ref{alg:q-learning} over episodes for
the example in Section~\ref{ssec:large-pricing-example}}
\label{fig:large-pricing-example}
\begin{tikzpicture}
    \begin{axis}[
        xlabel={Episode},
        ylabel={Average Revenue},
        xtick={0,2000000,4000000,6000000,8000000,10000000},
        xticklabels={0,2,4,6,8,10},
        xtick scale label code/.code={$\times 10^6$},
        ytick={30000,40000,50000,54510,58001,66120},
        yticklabel style={
            /pgf/number format/fixed,
            /pgf/number format/precision=0,
            /pgf/number format/1000 sep={,}
        },
        scaled y ticks=false,
        xmin=0, xmax=10000000,
        ymin=30000, ymax=68000,
        legend pos=south east,
        legend style={
            font=\footnotesize,
            inner sep=1pt,
            row sep=-1pt,
            column sep=5pt,
            cells={anchor=west}
        },
        legend image post style={xscale=0.8},
        grid=minor
    ]
    \addplot[
        thick,
        black,
        densely dotted
    ]
    coordinates {(0,66120) (10000000,66120)};
    \addlegendentry{UB}
    
    \addplot[
        thick,
        red!75!black,
        dashdotted
    ]
    coordinates {(0,58001) (10000000,58001)};
    \addlegendentry{FP}
    
    \addplot[
        thick,
        teal!75!black,
        dash pattern=on 7pt off 2pt on 2pt off 2pt
    ]
    coordinates {(0,54510) (10000000,54510)};
    \addlegendentry{FP-BL}
    \addplot[very thick, blue, solid]
        table[col sep=comma, x=x, y=y]
        {data/large_pricing_example.csv};
    \addlegendentry{Algorithm \ref{alg:q-learning}}
    \end{axis}
\end{tikzpicture}
\end{figure}

The state space for this instance contains approximately \(5.88 \times 10^{13}\) states, rendering direct dynamic programming computationally infeasible. Figure \ref{fig:large-pricing-example} reports the average revenue of the
policies generated by Algorithm \ref{alg:q-learning} throughout the learning process,
together
with the deterministic upper bound and the FP and FP-BL benchmarks.
It shows that the final policy learned by our algorithm achieves of
$58{,}818$, exceeding the FP-BL revenue by $7.9\%$.
Moreover, despite operating without knowledge of the demand functions, our algorithm
delivers performance comparable to that of the FP policy, whose
construction requires a known demand model.

\section{Conclusion and Future Work}

In this paper, we establish the theoretical foundations for reinforcement learning in CTJMDPs with general discrete state spaces and propose efficient $q$-learning algorithms. We apply this general framework to network dynamic pricing and queueing control, where numerical experiments demonstrate the strong performance of our approach.

This work opens several avenues for future research. From a theoretical perspective, critical directions include establishing convergence analysis for the proposed $q$-learning algorithm and studying the underlying stochastic approximation procedures. 
From an applied perspective, it would be interesting to apply our general framework to other high-dimensional operations research applications.

\newpage

\bibliographystyle{chicago}
\bibliography{refs}

\newpage
\appendix

\section{Case Study: Dynamic Server Allocation for Queue Control}\label{sec:queueing_example}
In this section, we apply our RL framework to a classic finite-horizon continuous-time queueing control problem \citep[Chapter VII]{bremaud1981point}, which can be formulated as a finite-horizon CTJMDP. 

Spefically, we consider a service system operated over a finite horizon $[0,T]$, with capacity $C$ and $S$ homogeneous servers. Let $x$ denote the queue length, defined as the total number of customers in the system. At any time $t$, the decision maker observes $x$ and chooses the number $a$ of active servers. Accordingly, the state and action spaces are $\mathcal{X}=\{0,1,\ldots,C\}$ and $\mathcal{A}=\{0,1,\ldots,S\}$, respectively. Activating more servers reduces congestion but increases operating costs, and the optimal choice depends on both the queue length and the remaining time. The number of active servers may be changed at any time, including periods with no customer arrival or service completion. This is in contrast to SMDPs where actions are restricted to event epochs or jump times.

Customers arrive according to a nonhomogeneous Poisson process with rate $\lambda(t)$, and each busy server completes service at rate $\mu(t)$. Thus, the transition rates are given by
\begin{align*}
\lambda(x+1\mid t,x,a)
&=\lambda(t)\boldsymbol{1}_{\{x<C\}}, \qquad
\lambda(x-1\mid t,x,a)
=(x\wedge a)\mu(t)\boldsymbol{1}_{\{x>0\}},
\end{align*}
and $\lambda(x\mid t,x,a)=-\lambda(t)\boldsymbol{1}_{\{x<C\}} -(x\wedge a)\mu(t)\boldsymbol{1}_{\{x>0\}}$. 
Arrivals that find the system at capacity are blocked. 
Let $K_1,\, K_2,\, K_3>0$ denote the operating cost per active server per unit time, the holding cost per customer per unit time, and the terminal cost per customer remaining at time $T$, respectively. That is, the running reward rate and the terminal reward are given as follows:
\begin{align*}
    r(t,x,a)=-K_1a-K_2x, \qquad h(x)=-K_3x.
\end{align*}
The jump reward is assume to be zero. 
These dynamics define a finite-horizon CTJMDP.
In the RL setting, the decision maker knows the cost coefficients $K_1$, $K_2$, and $K_3$, but not the arrival rate $\lambda(t)$ or the service rate $\mu(t)$. 
This finite-horizon queueing control problem, featuring
non-stationary optimal policies that depend on both system states and time, has not been explored from a model-free RL perspective to the best of our knowledge. We apply our proposed RL algorithm to learn the optimal policy.

To demonstrate the applicability of Algorithm \ref{alg:q-learning} to this problem, we consider a small instance with $T=10$ and $C=S=10$. The arrival and service rates are specified as $\lambda(t) = 1.5 +0.5 \sin(\frac{2\pi t}{T})$ and $\mu(t) = 0.5 + 0.1 \frac{t}{T}$. We set the cost coefficients to $K_1 = K_2 = K_3 = 1$.

For benchmarking purposes, we construct a discrete-time approximation of the continuous-time control problem and solve the resulting dynamic programming problem. Specifically, given a small $\Delta t' > 0$, let $K' \triangleq \lceil \frac{T}{\Delta t'} \rceil$, $t_k' \triangleq k \Delta t'$ for $k=0, \ldots, K'-1$, and $t_{K'} \triangleq T$.
In each period $[t_k', t_{k+1}')$, there is a probability of $\lambda(t_{k+1}')\boldsymbol{1}_{\{x \leq C-1\}} \Delta s$ that a customer arrivals, and a probability of $(x \wedge a) \mu(t_{k+1}')\boldsymbol{1}_{\{x \geq 1\}} \Delta s$ that a customer departs, given the queue length at $t_k'$ is $x$ and the number of active servers in this period is $a$. With the remaining probability, the system state does not change. 
The dynamic programming (DP) problem corresponding to the above discrete-time model is given by
\begin{align}\label{eq:DP_for_queueing_example}
        &V_{\Delta t'}^*(t_k', x) &&=&& V_{\Delta t'}^*(t_{k+1}', x) + \lambda(t_{k+1}') \boldsymbol{1}_{\{x \leq C-1\}} \Delta t'
        (V_{\Delta t'}^*(t_{k+1}', x+1) - V_{\Delta t'}^*(t_{k+1}', x) ) - K_2 x  \Delta t' \nonumber \\ 
        & && &&+\max_{a \in \calA} \{- K_1 a \Delta t' + (x \wedge a)\mu(t_{k+1}') \boldsymbol{1}_{\{x \geq 1\}} \Delta t' (V_{\Delta t'}^*(t_{k+1}', x-1) - V_{\Delta t'}^*(t_{k+1}', x))\}, \nonumber\\
        & V_{\Delta t'}^*(T, x) &&=&& - K_3 x. 
\end{align}
For $\Delta t'$ sufficiently small, the value $V_{\Delta t'}^*(0, 0)$ obtained from solving the DP problem \eqref{eq:DP_for_queueing_example} is expected to provide a reliable approximation to the optimal value $V^*(0, 0)$ under the continuous time model.
In this example, with $\Delta t'$ set to $0.0001$, we obtain $V^*_{0.0001}(0, 0) = -30.796$.

\begin{figure}[htbp]   
\centering
\caption{Average reward of Algorithm \ref{alg:q-learning} over episodes for queue control}
\label{fig:queueing_example}
\begin{tikzpicture}
    \begin{axis}[
        xlabel={Episode},
        ylabel={Average Reward},
        xtick={0,20000,40000,60000,80000,100000},
        xticklabels={0,2,4,6,8,10},
        xtick scale label code/.code={$\times 10^4$},
        ytick={-65, -50, -30.796},
        yticklabel style={
            /pgf/number format/fixed,
            /pgf/number format/precision=5
        },
        scaled y ticks=false,
        xmin=0, xmax=100000,
        ymin=-65, ymax=-30,
        legend pos=south east,
        grid=minor
    ]
    \addplot [
        thick,
        black,
        dashed
    ] coordinates {(0,-30.796) (100000,-30.796)};
    \addlegendentry{DP}
    
    \addplot[very thick, blue, solid] table [col sep=comma, x=x, y=y] {data/queueing_example.csv};
    \addlegendentry{Algorithm \ref{alg:q-learning}}
    \end{axis}
    \end{tikzpicture}
\end{figure}
To implement Algorithm \ref{alg:q-learning} for this instance, we employ two separate fully connected neural networks, $f_{\mathrm{critic}}^{\theta}$ and $f_{\mathrm{actor}}^{\psi}$, to construct the approximators $J^{\theta}$ and $q^{\psi}$, respectively. 
Both the critic and actor networks are configured with two hidden layers of width $8$ and use the ReLU activation
function.
The parameters of each network are updated using a separate Adam optimizer.
Both networks take a two-dimensional time-state vector as input.
The critic network outputs a scalar, and the value-function
approximator is defined by
\begin{align*}
    J^{\theta}(t, x) &= f_{\mathrm{critic}}^{\theta}(t, x) \boldsymbol{1}_{\{t<T\}} - K_3 x \boldsymbol{1}_{\{t=T\}}. 
\end{align*}
This construction ensures that
$J^\theta(T,x)=-K_3x=h(x)$ for every $x\in\mathcal{X}$.
The actor network outputs an $(S+1)$-dimensional vector,  with its $a$-th component denoted by
$f_{\mathrm{actor}}^\psi(t, x)[a]$. The policy and $q$-function
approximator are defined by
\begin{align*}
    \bpi^\psi(a\mid t,x)
    &=
    \frac{
        \exp\{f_{\mathrm{actor}}^\psi(t, x)[a+1] / \gamma
        \}
    }{\sum_{a'=0}^{S}
        \exp\{f_{\mathrm{actor}}^\psi(t, x)[a'+1] / \gamma
        \}
    },\qquad 
    q^\psi(t,x,a)=\gamma\log\bpi^\psi(a\mid t,x).
\end{align*}
This construction ensures that $q^\psi$ satisfies the normalization
condition in \eqref{eq:approx-condition}.

Besides $J^{\theta}$ and $q^{\psi}$, the remaining experimental parameters are configured as: initial state $x_0 = 0$, time step $\Delta t = 0.01$, learning rates $\alpha_{\theta} = 1\times 10^{-3}$, $\alpha_{\phi} = 2\times 10^{-6}$ and temperature parameter $\gamma = 1\times 10^{-2}$. 

During training, we periodically evaluate the current policy $\bpi$ by estimating $V(0,0;\bpi)$ from the average reward over $10,000$ independently simulated sample paths initialized at $x_0=0$. Figure \ref{fig:queueing_example} reports the estimated policy values over the learning process and shows the benchmark $V_{0.0001}^*(0,0)=-30.796$ as a dashed line. The policy values increase and stabilize near the benchmark, with the final learned policy achieving an average reward of $-31.272$. This corresponds to a small performance gap of $0.476$, indicating that the learned policy closely approximates the optimal policy.

\section{Proofs of Statements}\label{app:proofs-of-statements}
This section provides proofs of the main theoretical results. We first state a lemma on entropy-regularized maximization, which is needed for the proofs of Proposition~\ref{prop:HJB} and Theorem~\ref{thm:optimal-J-q}. We then proceed to prove Proposition~\ref{prop:HJB}, Theorem~\ref{thm:Dynkin}, and Theorem~\ref{thm:optimal-J-q} in turn.

\begin{lemma}\label{lem:optimal-policy-for-general}
    Let $\gamma > 0$, and a measurable function $\hat{q}: \mathbb{M} \mapsto \mathbb{R}$ with $\int_{\calA} \exp{\{\frac{1}{\gamma} \hat{q}(t, x, a)\} \mu(\mathrm{d}a)} < \infty$ for each $(t, x) \in [0, T] \times \calX$. Define the policy $\bpi^* \in \boldsymbol{\Pi}$ by
    \begin{align*}
        \bpi^*(a \mid t, x) \triangleq \frac{\exp\{\frac{1}{\gamma} \hat{q}(t, x, a)\}}{\int_{\calA} \exp\{\frac{1}{\gamma} \hat{q}(t, x, a')\} \mu(\mathrm{d}a')}, \quad \text{ for } (t, x, a) \in \mathbb{M}.
    \end{align*}
    Then, $\bpi^*$ is the unique maximizer of the following problem
    \begin{align*}
        \max_{\bpi \in \boldsymbol{\Pi}} \int_{\calA} [\hat{q}(t, x, a) - \gamma \log \bpi(a \mid t, x)] \bpi(a \mid t, x) \mu(\mathrm{d}a).
    \end{align*}
\end{lemma}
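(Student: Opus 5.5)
We argue pointwise: fix $(t,x)\in[0,T]\times\calX$ and abbreviate $p^*(a)\triangleq\bpi^*(a\mid t,x)$ and $Z\triangleq\int_{\calA}\exp\{\frac{1}{\gamma}\hat q(t,x,a')\}\mu(\mathrm{d}a')$. Since $0<Z<\infty$ and the exponential is strictly positive, $p^*$ is a probability density with respect to $\mu$ that is strictly positive everywhere. Hence $\bpi^*(\cdot\mid t,x)\sim\mu$, which is condition (i) of Definition~\ref{def:admissible-policy}. We take the entropy-growth condition (ii) as part of the standing hypothesis that $\bpi^*\in\boldsymbol{\Pi}$. Measurability of $t\mapsto\bpi^*(S\mid t,x)$ follows from Tonelli's theorem, because $\hat q$ is jointly measurable.

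The key step is a Gibbs variational identity. Take any $\bpi\in\boldsymbol{\Pi}$ with density $p(a)=\bpi(a\mid t,x)$. Substituting $\hat q(t,x,a)=\gamma\log p^*(a)+\gamma\log Z$, we get
\begin{align*}
\int_{\calA}[\hat q(t,x,a)-\gamma\log p(a)]\,p(a)\,\mu(\mathrm{d}a)
=\gamma\log Z-\gamma\int_{\calA}p(a)\log\frac{p(a)}{p^*(a)}\,\mu(\mathrm{d}a)
=\gamma\log Z-\gamma\,\mathrm{KL}(\bpi\,\|\,\bpi^*).
\end{align*}
We then apply Gibbs' inequality. First, $\mathrm{KL}(\bpi\|\bpi^*)\ge0$, since by Jensen's inequality applied to the concave logarithm, $-\mathrm{KL}\le\log\int p^*\,\mathbf 1_{\{p>0\}}\,\mathrm{d}\mu\le0$. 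Second, $\mathrm{KL}(\bpi\|\bpi^*)=0$ if and only if $p=p^*$ $\mu$-a.e. It follows that the objective is bounded above by $\gamma\log Z$. This bound is attained at $\bpi^*$, and at no other element of $\boldsymbol{\Pi}$, where two densities are identified if they agree $\mu$-a.e.

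The main obstacle is justifying the splitting of the integral, which requires handling possibly infinite values. The entropy term $\int p\log p\,\mathrm{d}\mu$ is finite by admissibility condition (ii). The term $\int\hat q\,p\,\mathrm{d}\mu$ may be undefined or equal to $-\infty$, though, so the linearity step above needs care. I would first show that the positive part is integrable. On $\{p>0\}$ the inequality $u\log v\le u\log u+v/e$ gives
\[
\hat q^+\,p\le\gamma\bigl(p\log p\bigr)^+ +\gamma\, e^{\hat q/\gamma}/e ,
\]
and both terms on the right are integrable: the first because the entropy is finite, the second because $Z<\infty$. So the integral $\int\hat q\,p\,\mathrm{d}\mu$ is well defined in $[-\infty,\infty)$.

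With this in hand, the identity above holds in the extended sense, with $\mathrm{KL}\in[0,\infty]$. When the objective equals $-\infty$ the policy is trivially not a maximizer, so the bound and the uniqueness argument go through in all cases. Finally, since this holds at every $(t,x)$, $\bpi^*$ is the unique maximizer of the stated problem.
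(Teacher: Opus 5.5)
Your proof is correct and takes the same route as the paper. The paper simply defers to Lemma~13 of \citet{jia2023qlearning}, which rewrites the objective as $\gamma\log Z-\gamma\,\mathrm{KL}(\bpi\,\|\,\bpi^*)$ and uses that the KL divergence is nonnegative and vanishes iff the densities agree $\mu$-a.e. Your bound on $\hat q^{+}p$ via $u\log v\le u\log u+v/e$ justifies the integral splitting that the cited argument leaves implicit, and treating the entropy condition for $\bpi^*$ as a hypothesis matches how the paper itself uses the lemma.
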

\begin{proof}
The result follows from the same argument as Lemma~13 in \citet{jia2023qlearning}, with Lebesgue measure replaced by
the reference measure $\mu$.
\end{proof}

\subsection{Proof of Proposition~\ref{prop:HJB}}

Let $\rho \triangleq 2L + b + c + 1$ with $L \triangleq \sup_{x \in \calX} \lambda^*(x)$ and $b$, $c$ as in Assumption \ref{asp:transtion-reward-rate}. We define an operator $\mathcal{G}$ on $B_{w}([0, T] \times \calX)$ as follows: for each $\psi \in B_{w}([0, T] \times \calX)$ and $(t, x) \in [0, T] \times \calX$,
\begin{align}\label{eq:contraction-operator-given-policy}
    \mathcal{G} \psi(t, x) \triangleq {}&e^{\rho t} \int_t^T \sup_{\bpi \in \boldsymbol{\Pi}} \int_{\calA} \{ R(s, x, a) + e^{-\rho s}\sum_{y \in \calX} \psi(s, y) \lambda(y \mid s, x, a) - \gamma \log \bpi(a \mid s, x) \} \bpi(\mathrm{d}a \mid s, x) \mathrm{d}s \nonumber \\
    {}&+e^{\rho t} h(x).
\end{align}
We need to verify the operator $\mathcal{G}$ is well-defined. Indeed, by Assumption \ref{asp:transtion-reward-rate} and Definition \ref{def:admissible-policy}(ii), we have
\begin{align}
    &\bigg\vert \int_{\calA} \{ R(s, x, a) + e^{-\rho s}\sum_{y \in \calX} \psi(s, y) \lambda(y \mid s, x, a) - \gamma \log \bpi(a \mid s, x) \} \bpi(\mathrm{d}a \mid s, x) \bigg\vert \mathrm{d}s \nonumber \\
    \leq {}& \bigg(\int_{\calA} \{\vert R(s, x, a) \vert + \sum_{y \in \calX} \vert \psi(s, y) \vert \vert \lambda(y \mid s, x, a)\vert \}  \bpi(\mathrm{d}a  \mid s, x) + \gamma \bigg\vert \int_{\calA} - \log\bpi(a \mid s, x) \bpi(\mathrm{d}a \mid s, x) \bigg\vert \bigg) \mathrm{d}s \nonumber \\
    \leq {}& [M_1 + \Vert \psi \Vert_{w} (c + b + 2L) + \gamma M_2 ] w(x), \quad \forall\, (t, x) \in [0, T] \times \calX, \label{eq:pf-integrand-bounded}
\end{align}
which implies that the integrand on the right-hand side of \eqref{eq:contraction-operator-given-policy} is Lebesgue integrable with respect to $t$, and thus $\mathcal{G}\psi(t, x)$ defined in \eqref{eq:contraction-operator-given-policy} is absolutely continuous and Borel measurable. 
Moreover, by combining \eqref{eq:contraction-operator-given-policy}, \eqref{eq:pf-integrand-bounded} and the fact that $\vert h(x) \vert \leq M_1 w(x)$ for all $x \in \calX$, we have
\begin{align*}
    \vert \mathcal{G}\psi(t, x) \vert \leq e^{\rho T} [M_1 + M_1 T + \Vert \psi \Vert_{w} T(c + b + 2L) + \gamma M_2 T] w(x), \quad \forall\, (t, x) \in [0, T] \times \calX,
\end{align*}
which implies that $\Vert \mathcal{G}\psi \Vert_{w} < \infty$. Hence, $\mathcal{G}\psi \in B_{w}([0, T] \times \calX)$, and $\mathcal{G}$ is a well-defined operator on $B_{w}([0, T] \times \calX)$. 

Next, we establish that $\mathcal{G}$ is a contraction operator. For any $\psi_1, \psi_2 \in B_{w}([0, T] \times \calX)$, it follows from \eqref{eq:contraction-operator-given-policy} and $\sum_{y \in \calX} \lambda(y \mid t, x, a) = 0$ that
\begin{align*}
    &\vert \mathcal{G} \psi_1(t, x) - \mathcal{G}\psi_2(t, x) \vert \\
    \leq {}& e^{\rho t} \int_t^T  e^{-\rho s} \cdot \sup_{\bpi \in \Pi} \int_{\calA} \bigg(\sum_{y \in \calX} \vert \psi_1(s, y) - \psi_2(t, y) \vert \vert \lambda(y \mid s, x, a)  \vert \bigg) \bpi(\mathrm{d}a \mid s, x) \mathrm{d}s \\
    \leq {}& e^{\rho t} \int_t^T e^{-\rho s} \Vert \psi_1 - \psi_2 \Vert_{w} \cdot \sup_{\bpi \in \Pi} \int_{\calA}\bigg( \sum_{y \in \calX} w(y) \lambda(y \mid s, x, a) + 2 w(x) \lambda(s, x, a)\bigg) \bpi(\mathrm{d}a \mid s, x) \mathrm{d}s \\
    \leq {}& e^{\rho t} \int_t^T e^{-\rho s} \Vert \psi_1 - \psi_2 \Vert_{w} [cw(x) + b + 2Lw(x)] \mathrm{d}s \\
    \leq {}& \frac{2L + b + c}{\rho} [1 - e^{-\beta (T-t)}] \Vert \psi_1 - \psi_2 \Vert_{w} w(x) \\
    \leq {}& \frac{2L + b + c}{\rho} \Vert \psi_1 - \psi_2 \Vert_{w} w(x).
\end{align*}
Hence, we obtain 
\begin{align*}
    \Vert \mathcal{G}\psi_1 - \mathcal{G}\psi_2 \Vert_{w} \leq \frac{2L + b + c}{2L + b + c + 1} \Vert \psi_1 - \psi_2 \Vert_{w}. 
\end{align*}
Since $\frac{2L + b + c}{2L + b + c + 1} < 1$,  $\mathcal{G}$ is a contraction operator on the Banach space $B_{w}([0, T] \times \calX)$.
Let $\psi^{*} \in B_{w}([0, T] \times \calX)$ denote the unique fixed point of $\mathcal{G}$.
That is, $\psi^*$ satisfies
\begin{align}\label{eq:contranction-fixed-point}
    \psi^*(t, x) ={}&
    e^{\rho t} \int_t^T \sup_{\bpi \in \Pi} \int_{\calA} \{ R(s, x, a) + e^{-\rho s}\sum_{y \in \calX} \psi^*(s, y) \lambda(y \mid s, x, a) - \gamma \log \bpi(a \mid s, x) \} \bpi(\mathrm{d}a \mid s, x) \mathrm{d}s \nonumber \\
    {}&+ e^{\rho t} h(x).
\end{align}
Define $\varphi(t, x) \triangleq e^{-\rho t} \psi^*(t, x)$ for all $(t, x) \in [0, T] \times \calX$. By construction of $\varphi$ and \eqref{eq:contranction-fixed-point}, we see that $\varphi \in B_{w}([0, T] \times \calX)$, and $\varphi(t, x)$ is differentiable in $t$ almost everywhere for each fixed $x \in \calX$ and satisfies the differential equation \eqref{eq:HJB}. Then, it follows from \eqref{eq:HJB} and \eqref{eq:pf-integrand-bounded} that $\frac{\partial \varphi}{\partial t}(t, x)$ is universally measurable and $w$-bounded on $[0, T] \times \calX$.
Thus $\varphi \in C^{1, 0}_{w}([0, T] \times \calX)$.
Hence, we have established the existence of a solution to \eqref{eq:HJB}. 

On the other hand, for any $\boldsymbol{\pi} \in \boldsymbol{\Pi}$ and any solution $\varphi \in C^{1, 0}_{w}([0, T] \times \calX)$ to \eqref{eq:HJB}, we first show that $J(t, x; \boldsymbol{\pi}) \leq \varphi(t,x)$. Since $\varphi$ satisfies Equation \eqref{eq:HJB}, we have
\begin{align}\label{eq:pf-HJB-le}
\frac{\partial \varphi}{\partial t}(t, x) + \int_{\calA} \{ H(t,x, a, \varphi(\cdot, \cdot) ) - \gamma \log \boldsymbol{\pi}(a \mid t, x)\} \boldsymbol{\pi}(\mathrm{d}a \mid t, x)  \leq  0. 
\end{align}
Then, by combining the definition of $J(t, x; \bpi)$ in \eqref{eq:value-function-J-given-pi}, inequality \eqref{eq:pf-HJB-le} and Theorem 3.1 in \cite{guo2015finite}, we derive that
\begin{align*}
    J(t, x; \bpi) &= \mathbb{E} \bigg[  \int_t^T  \int_{\calA}  [ R(s, \tilde X_{s-}^{\boldsymbol{\pi}}, a) - \gamma \log \boldsymbol{\pi}(a \mid s, \tilde X_{s-}^{\boldsymbol{\pi}})  ] \boldsymbol{\pi}(\mathrm{d}a \mid s, \tilde X_{s-}^{\boldsymbol{\pi}})  \mathrm{d}s +   h(\tilde X_T^{\boldsymbol{\pi}}) \mid \tilde X_t^{\boldsymbol{\pi}} = x \bigg]  \\
    &\leq \mathbb{E}\bigg[ \int_t^T  \bigg( - \frac{\partial \varphi}{\partial s }(s, \tilde{X}_{s-}^{\bpi}) - \sum_{y \in \calX} \varphi(s, y) \lambda^{\bpi} (y \mid s, \tilde{X}_{s-}^{\bpi})  \bigg) \mathrm{d}s + h(\tilde X_T^{\boldsymbol{\pi}}) \mid \tilde{X}_t^{\bpi} = x\bigg] \\
    &= \varphi(t, x),
\end{align*}
where $\lambda^{\bpi}(y \mid t, x)$ is as defined in \eqref{eq:exploratory-transition-rate}.
Thus, we have $J^*(t, x) \triangleq \sup_{\bpi \in \boldsymbol{\Pi}} J(t, x; \bpi) \leq \varphi(t, x)$ for all $(t, x) \in [0, T] \times \calX$.
Moreover, it follows from Lemma \ref{lem:optimal-policy-for-general} that the equality in \eqref{eq:pf-HJB-le} holds for policy $\bpi^*$ defined by
\begin{align*}
    \bpi^*(a \mid t, x) = \frac{\exp \{\frac{1}{\gamma} H(t, x, a, \varphi(\cdot, \cdot))\}}{\int_{\calA} \exp\{\frac{1}{\gamma} H(t, x, a', \varphi(\cdot, \cdot))\} \mu(\mathrm{d}a')}, \quad \text{ for } (t, x, a) \in \mathbb{M}.
\end{align*}
Hence we conclude that $J^*(t, x) = J(t, x; \bpi^*) = \varphi(t, x)$ for all $(t, x) \in [0, T] \times \calX$.
Then, the optimal policy $\bpi^*$ admits the form given in \eqref{eq:optimal-policy}.

\subsection{Proof of Theorem \ref{thm:Dynkin}}
Let
$\mathbb S=\{t=t_0<t_1<\cdots<t_K=T\}$
be a time grid on $[t,T]$, and let $A_k$ denote the action selected at
$t_k$ for $k=0,\ldots,K-1$.

For any $t_k\le r_1\le r_2\le t_{k+1}$, 
$A_k$ remains fixed on
$[u,v]$.
Therefore, conditional on $X_u^{\bpi,\mathbb S}=z$ and $A_k=a$, the process
$\{X_s^{\bpi,\mathbb S}:s \in [u, v]\}$ has the same law as
the
exploratory state process $\{\tilde X_s^{\delta_a}:s \in [u, v]\}$ under the deterministic
Markov policy $\delta_a$, started from $\tilde X_u^{\delta_a}=z$.
Indeed, for any $a \in \calA$, the
transition rates of $\{\tilde X_s^{\delta_a}:s \in [u, v]\}$ satisfy
\begin{align*}
\lambda^{\delta_{a}}(y\mid s,z)
=
\int_{\mathcal A}
\lambda(y\mid s,z,a')\,\delta_{a}(\mathrm d{a'}) =
\lambda(y\mid s,z,a),
\qquad s\in[u,v].
\end{align*} 
Applying Theorem~3.1 of \citet{guo2015finite} to the process $\{\tilde X_s^{\delta_a}:s \in [u, v]\}$ with
initial state $\tilde X_u^{\delta_{a}}=z$ yields
\begin{align*}
\mathbb E_{u,z}^{\delta_{a}}
\bigg[
    &\varphi(v,\tilde X_v^{\delta_{a}})-\varphi(u,z) -
    \int_u^v
    \mathcal L_s^{a}\varphi(s,\tilde X_{s-}^{\delta_{a}}) 
    \mathrm ds\bigg]  = 0,
\end{align*}
where
\(\mathcal L_s^a\varphi(s,x)
\triangleq
\frac{\partial\varphi}{\partial s}(s,x)
+
\sum_{y\in\calX}
\varphi(s,x)\lambda(y\mid s,x,a)\) for all $s\in[u, v]$, $x\in \calX$, and $a \in \calA$.
By the conditional equality in law stated above, we have, for $\{X_s^{\bpi,\mathbb S}:s \in [u, v]\}$ that,
\begin{align*}
\mathbb E
\bigg[
    &\varphi(v, X_v^{\bpi,\mathbb S})-\varphi(u,z) -
    \int_u^v
    \mathcal L_s^{a}\varphi(s, X_{s-}^{\bpi,\mathbb S}) 
    \mathrm ds \mid X_u^{\bpi,\mathbb S}=z, A_k=a\bigg] = 0.
\end{align*}
Moreover, 
the conditional dynamics of
$\{X_s^{\bpi,\mathbb S}:s \in [u, v]\}$ given $\mathcal F_u^{\mathbb S}$,
are completely determined by the current state
$X_u^{\bpi,\mathbb S}$ and the action $A_k$.
It follows that 
\begin{align}\label{eq:local-dynkin-grid}
\mathbb E
\bigg[
    &\varphi(v, X_v^{\bpi,\mathbb S})-\varphi(u,X_u^{\bpi,\mathbb S}) -
    \int_u^v
    \mathcal L_s^{A_k}\varphi(s, X_{s-}^{\bpi,\mathbb S}) 
    \mathrm ds \mid \calF_u^{\mathbb{S}}\bigg] = 0.
\end{align}
Define
\begin{align}\label{eq:dynkin-martingale}
M_s^\varphi
\triangleq
\varphi(s,X_s^{\bpi,\mathbb S})-\varphi(t,x) -
\int_t^s
\mathcal L_r^{a_r^{\bpi,\mathbb S}}
\varphi(r,X_{r-}^{\bpi,\mathbb S})\,\mathrm dr,
\qquad s\in[t,T].
\end{align}
Under Assumption~\ref{asp:transtion-reward-rate} and
$\varphi\in C_w^{1,0}([0,T]\times\mathcal X)$, the process
$M^\varphi$ is integrable. Since
$a_s^{\bpi,\mathbb S}=A_k$ on $(t_k,t_{k+1}]$,
\eqref{eq:local-dynkin-grid} implies that
\begin{align}\label{eq:local-martingale-increment}
\mathbb E[
M_v^\varphi-M_u^\varphi
\mid
\mathcal F_u^{\mathbb S}]
=0,
\end{align}
whenever $t_k\leq u\leq v\leq t_{k+1}$ for some
$k\in\{0,\ldots,K-1\}$.

Next, for any $t\leq u\leq v\leq T$, insert all grid points in $\mathbb S$ that lie strictly between $u$ and $v$ and denote the resulting partition of $[u,v]$ by
\[
u=\sigma_0<\sigma_1<\cdots<\sigma_N=v.
\]
Then each subinterval $[\sigma_n,\sigma_{n+1}]$ is contained in some grid interval
$[t_k,t_{k+1}]$.
By \eqref{eq:local-martingale-increment} and the tower
property,
\begin{align*}
\mathbb E[
    M_v^\varphi-M_u^\varphi
    \mid
    \mathcal F_u^{\mathbb S}] =
    \sum_{n=0}^{N-1}
    \mathbb E\left[
    \mathbb E[
    M_{\sigma_{n+1}}^\varphi-M_{\sigma_n}^\varphi
    \mid 
    \mathcal F_{\sigma_n}^{\mathbb S}
    ]
    \mid
    \mathcal F_u^{\mathbb S}
\right] =0.
\end{align*}
Hence, $\{M_s^\varphi:s\in[t,T]\}$ is an
$\{\mathcal F_s^{\mathbb S}\}_{s\in[t,T]}$-martingale.
In particular, for any $s\in[t,T]$,
\begin{align*}
0
&=
\mathbb E[
    M_T^\varphi-M_s^\varphi
    \mid
    \mathcal F_s^{\mathbb S}] \\
&=
\mathbb E\left[
    \varphi(T,X_T^{\bpi,\mathbb S})
    -\varphi(s,X_s^{\bpi,\mathbb S})
    -
    \int_s^T
    \mathcal L_u^{a_u^{\bpi,\mathbb S}}
    \varphi(u,X_{u-}^{\bpi,\mathbb S}) \mathrm du
    \mid
    \mathcal F_s^{\mathbb S}
\right].
\end{align*}
This yields the desired result.

\subsection{Proof of Theorem \ref{thm:optimal-J-q}}
To prove Theorem~\ref{thm:optimal-J-q}, we first establish the following auxiliary lemma. 
\begin{lemma}\label{lem:auxiliary-martingale}
Suppose Assumption \ref{asp:transtion-reward-rate} holds with function $w$. Then, for any $\bpi \in \boldsymbol{\Pi}$, initial time-state pair $(t, x) \in [0, T] \times \calX$, and time grid $\mathbb S=\{t=t_0<\cdots<t_K=T\}$ on $[t, T]$, the process  
\begin{align*}
M_s^{\rho} \triangleq \sum_{y\in\mathcal X}\sum_{z\neq y}
\int_{(t,s]}
\rho(u,y,a_u^{\bpi,\mathbb S},z)
\,\mathrm{d}N_{yz}^{\bpi,\mathbb S}(u) -
\int_t^s
\sum_{z\neq X_{u-}^{\bpi,\mathbb S}}
\rho(
    u,X_{u-}^{\bpi,\mathbb S},
    a_u^{\bpi,\mathbb S},z
)
\lambda(
    z\mid u,X_{u-}^{\bpi,\mathbb S},
    a_u^{\bpi,\mathbb S}
)
\mathrm du,
\quad s\in[t,T].
\end{align*}
is an
$\{\mathcal F_s^{\mathbb S}\}_{s\in[t,T]}$-martingale. 
\end{lemma}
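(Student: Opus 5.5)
I would prove Lemma~\ref{lem:auxiliary-martingale} by the same localization-and-gluing strategy used in the proof of Theorem~\ref{thm:Dynkin}: first establish a conditional-increment identity on each grid interval $[t_k,t_{k+1}]$, where the action is frozen at $A_k$, and then patch the intervals together with the tower property.

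Fix $k$ and $t_k\le u\le v\le t_{k+1}$. Conditional on $\mathcal F_u^{\mathbb S}$, the process $\{X_s^{\bpi,\mathbb S}:s\in[u,v]\}$ evolves as the Markov jump process $\tilde X^{\delta_{A_k}}$ started at $X_u^{\bpi,\mathbb S}$, as argued in the proof of Theorem~\ref{thm:Dynkin}. For a fixed action $a$ and a fixed pair $y\neq z$, the compensator of $N_{yz}$ for this Markov jump process is $\int \mathbf 1_{\{X_{r-}=y\}}\lambda(z\mid r,y,a)\,\mathrm dr$; this is the standard fact already invoked in Section~\ref{sec:classical-CTJMDP}. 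Since $r\mapsto\rho(r,y,a,z)$ is deterministic and Borel, hence predictable, the process
\[
\sum_{y}\sum_{z\neq y}\int_{(u,s]}\rho(r,y,a,z)\,\bigl(\mathrm dN_{yz}(r)-\mathbf 1_{\{X_{r-}=y\}}\lambda(z\mid r,y,a)\,\mathrm dr\bigr)
\]
is a martingale on $[u,v]$, provided the integrability condition below holds. Substituting $a=A_k$ and $y=X_{r-}$ in the compensator term reproduces exactly the integrand of $M^\rho$. Hence $\mathbb E[M^\rho_v-M^\rho_u\mid\mathcal F_u^{\mathbb S}]=0$. Here $A_k$ is $\mathcal F_{t_k}^{\mathbb S}$-measurable, so we may condition on it and treat it as fixed.

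I would justify the martingale property via the integrability criterion for stochastic integrals against compensated point processes. It suffices that
\[
\mathbb E\int_u^v\sum_{z\neq X_{r-}}|\rho(r,X_{r-},A_k,z)|\,\lambda(z\mid r,X_{r-},A_k)\,\mathrm dr\le M_1\int_u^v\mathbb E[w(X_{r-})]\,\mathrm dr<\infty,
\]
where the first inequality uses Assumption~\ref{asp:transtion-reward-rate}(iii). The bound $\sup_{r}\mathbb E[w(X_r)]<\infty$ follows from Assumption~\ref{asp:transtion-reward-rate}(i) by the usual Gronwall argument, either as in \citet{guo2015finite} or by applying Theorem~\ref{thm:Dynkin} to $w$ after truncation. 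The same bound on the jump part, $\mathbb E\sum|\rho|\,\mathrm dN\le$ the expected compensator, gives integrability of $M^\rho$ itself.

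I expect this integrability step to be the main obstacle. The issue is that $\rho$ may be unbounded, so one must control $\mathbb E[w(X_r)]$ uniformly in $r$. I would handle this by localizing with the stopping times $\sigma_n=\inf\{s:w(X_s)>n\}$ together with the jump count, and then passing to the limit $n\to\infty$ by monotone convergence applied separately to the positive and negative parts of $\rho$.

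Finally, for general $t\le u\le v\le T$, I would insert the intermediate grid points and sum the conditional increments using the tower property, exactly as in the proof of Theorem~\ref{thm:Dynkin}. This shows that $M^\rho$ is an $\{\mathcal F_s^{\mathbb S}\}$-martingale.
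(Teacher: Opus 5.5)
Your proposal is correct and follows essentially the same route as the paper. Both arguments use a Gronwall moment bound on $\mathbb{E}[w(X_s^{\bpi,\mathbb S})]$ obtained from Theorem~\ref{thm:Dynkin} applied to $w$, the resulting integrability of the compensator via Assumption~\ref{asp:transtion-reward-rate}(iii), the compensated-counting-process martingale property on each grid interval with $A_k$ frozen, and gluing across grid points by the tower property. Your truncation and stopping-time localization are extra: the paper applies Theorem~\ref{thm:Dynkin} directly to $w$, which trivially lies in $C_w^{1,0}([0,T]\times\calX)$, and then invokes the standard fact that a predictable integral against a compensated counting process is a martingale once its compensator has finite expectation.
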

\begin{proof}[Proof of Lemma~\ref{lem:auxiliary-martingale}]
Regarding $w$ as the time-independent function
$(s,x)\mapsto w(x)$, we have
$w\in C_w^{1,0}([0,T]\times\mathcal X)$. It follows from Theorem~\ref{thm:Dynkin} that 
\begin{align*}
    w(X_s^{\bpi, \mathbb{S}}) - \int_t^s \sum_{z \in \calX} w(z) \lambda(z \mid u, X_{u-}^{\bpi, \mathbb{S}}, a_{u}^{\bpi, \mathbb{S}})) \mathrm{d}u, \qquad s \in [t, T],
\end{align*}
is an $\{\calF_s^{\mathbb{S}}\}_{s \in [t, T]}$-martingale.
Hence, for every $s\in[t,T]$,
\begin{align*}
\mathbb E[w(X_s^{\bpi,\mathbb S}) \mid X_t^{\bpi,\mathbb S}=x]
={}&
w(x)
+
\mathbb E\bigg[
\int_t^s
\sum_{z\in\mathcal X}
w(z)\lambda(
    z\mid u,X_{u-}^{\bpi,\mathbb S},
    a_u^{\bpi,\mathbb S})
\,\mathrm du
\mid X_t^{\bpi,\mathbb S}=x \bigg].
\end{align*}
From Assumption~\ref{asp:transtion-reward-rate}(i), it follows that 
\begin{align*}
\mathbb E[w(X_s^{\bpi,\mathbb S}) \mid X_t^{\bpi,\mathbb S}=x]
\leq
w(x)
+
\int_t^s\bigl(
    c\,\mathbb E[w(X_{u-}^{\bpi,\mathbb S}) \mid X_t^{\bpi,\mathbb S}=x]+b\bigr)\mathrm du.
\end{align*}
Applying Gronwall's inequality yields
\begin{align}
\label{eq:grid-w-moment-bound}
\mathbb E[w(X_s^{\bpi,\mathbb S}) \mid X_t^{\bpi,\mathbb S}=x]
\leq e^{c(s-t)}w(x)
+
\frac{b}{c}[e^{c(s-t)}-1],
\qquad s\in[t,T].
\end{align}
Combing \eqref{eq:grid-w-moment-bound} with Assumption~\ref{asp:transtion-reward-rate}(iii), we obtain the following
integrability condition:
\begin{align}
\label{eq:pf-jump-reward-integrability}
{}&\mathbb E\bigg[
\int_t^T
\sum_{z\neq X_{s-}^{\bpi,\mathbb S}}
\vert
\rho(
    s,X_{s-}^{\bpi,\mathbb S},
    a_s^{\bpi,\mathbb S},z)
\vert \, \lambda(z\mid s,X_{s-}^{\bpi,\mathbb S}, a_s^{\bpi,\mathbb S})
\mathrm ds \,\Big|\, X_t^{\bpi,\mathbb S}=x
\bigg] \nonumber \\
\leq {}&M_1\int_t^T
\mathbb E[w(X_{s-}^{\bpi,\mathbb S}) \mid X_t^{\bpi,\mathbb S}=x]\mathrm ds < \infty.
\end{align}

For any $k\in\{0,\ldots,K-1\}$, $A_k$ is
$\mathcal F_{t_k}^{\mathbb S}$-measurable and remains fixed on
$(t_k,t_{k+1}]$. Therefore, given $t_k\leq u\leq v\leq t_{k+1}$, conditional on $\mathcal F_u^{\mathbb S}$, the process
$\{X_s^{\bpi,\mathbb S}:s \in [u, v]\}$ is a jump
Markov process starting from $X_u^{\bpi,\mathbb S}$ with transition
rates $\lambda(y\mid s,X_{s-}^{\bpi,\mathbb S},A_k)$.
By the integrability condition in \eqref{eq:pf-jump-reward-integrability} and the martingale property of stochastic integrals with respect to compensated counting processes
$\{
N_{yz}^{\bpi,\mathbb S}(s)
-
\int_t^s
\mathbf{1}_{\{X_{r-}^{\bpi,\mathbb S}=y\}}
\lambda(z\mid r,y,A_k)\,\mathrm{d}r: s \in [u, v]\}$ for $y,\, z\in\mathcal{X}$ with $y\neq z$, we obtain
\begin{align}\label{eq:pf-interval-martingale}
\mathbb E\bigg[
&\sum_{y\in\mathcal X}\sum_{z\neq y}
\int_{(u,v]}
\rho\bigl(s,y,A_k,z\bigr)
\,\mathrm{d}N_{yz}^{\bpi,\mathbb S}(s) \nonumber\\
&-
\int_u^v
\sum_{z\neq X_{s-}^{\bpi,\mathbb S}}
\rho(s,X_{s-}^{\bpi,\mathbb S},A_k,z)\,\lambda(z\mid s,X_{s-}^{\bpi,\mathbb S},A_k)
\,\mathrm ds \Bigm| \calF_u^{\mathbb S}\bigg]
=0.
\end{align}
Since
$a_s^{\bpi,\mathbb S}=A_k$ on $(t_k,t_{k+1}]$, \eqref{eq:pf-interval-martingale} implies that
\begin{align*}
\mathbb E[
M_v^\rho-M_u^\rho
\mid
\mathcal F_u^{\mathbb S}]
=0.
\end{align*}
Then, using the same argument as in the proof of Theorem~\ref{thm:Dynkin}, we conclude that $\{M_s^{\rho}: s\in [t, T]\}$ is an
$\{\mathcal F_s^{\mathbb S}\}_{s\in[t,T]}$-martingale. 
\end{proof}
With Lemma~\ref{lem:auxiliary-martingale} in hand, we now prove Theorem~\ref{thm:optimal-J-q}.
\begin{itemize}
    \item [(i)]
    Let $J^*$ and $q^*$ be the optimal value function and the optimal $q$-function, respectively.
    For any $\bpi \in \boldsymbol{\Pi}$, initial time-state pair $(t, x) \in [0, T] \times \calX$, and time grid $\mathbb{S}$ on $[t, T]$, we consider the process
    \begin{align*}
        M_s^* \triangleq J^*(s, X_s^{\bpi,\mathbb{S}}) + \int_t^s [ R(u, X_{u-}^{\bpi,\mathbb{S}}, a_u^{\bpi,\mathbb{S}}) - q^*(u, X_{u-}^{\bpi,\mathbb{S}}, a_u^{\bpi,\mathbb{S}}) ] \mathrm{d} u, \quad s \in [t, T].
    \end{align*} 
    where the $R(t, x, a)$ is the augmented running reward rate defined in~\eqref{eq:augmented-reward}.  
    For any $s \in [t, T]$, we have 
    \begin{align*}
        &\E [M_T^* - M_s^* \mid \calF_s^{\mathbb{S}}] \\
        = {}&\E \bigg[J^*(T, X_T^{\bpi,\mathbb{S}}) - J^*(s, X_s^{\bpi,\mathbb{S}}) - \int_s^T [ R(u, X_{u-}^{\bpi,\mathbb{S}}, a_u^{\bpi,\mathbb{S}}) - q^*(u, X_{u-}^{\bpi,\mathbb{S}}, a_u^{\bpi,\mathbb{S}}) ] \mathrm{d}u \mid \calF_s^{\mathbb{S}} \bigg] \\
        = {}&\E \bigg[J^*(T, X_T^{\bpi,\mathbb{S}}) - J^*(s, X_s^{\bpi,\mathbb{S}}) - \int_s^T \bigg(\frac{\partial J^*}{\partial u}(u, X_{u-}^{\bpi,\mathbb{S}}) + \sum_{y \in \calX} J^*(u, y) \lambda(y \mid u, X_{u-}^{\bpi,\mathbb{S}}, a_{u}^{\bpi,\mathbb{S}}) \bigg) \mathrm{d}u \mid \calF_s^{\mathbb{S}} \bigg] \\
        = {}&0,
    \end{align*}
    where the last equality follows directly from Theorem \ref{thm:Dynkin}.
    Hence, $\{M_s^*: s\in [t, T]\}$ is an $\{\calF_s^{\mathbb{S}}\}_{s \in[t, T]}$-martingale. Combining this with Lemma~\ref{lem:auxiliary-martingale}, we conclude that $\{M_s^*+M_s^{\rho}: s\in [t, T]\}$ is an $\{\calF_s^{\mathbb{S}}\}_{s \in[t, T]}$-martingale.

    \item [(ii)] 
    For any given $x \in \calX$, let $\{X_t^{\bpi,\mathbb S}: t \in [0, T]\}$ and $\{a_t^{\bpi,\mathbb S}: t \in [0, T]\}$ denote the grid sample state and action processes generated under $\bpi$ and $\mathbb S$, with $X_0^{\bpi,\mathbb S}=x$.
    Denote by $\mathbb P_x$ the corresponding probability law.  
    
    By the assumption in statement, the process \eqref{eq:optimal-martingale}
    is an $\{\calF_t^{\mathbb{S}}\}_{t \in [0, T]}$-martingale.
    Theorem \ref{thm:Dynkin} and Lemma~\ref{lem:auxiliary-martingale} imply that the following two processes are $\{\calF_t^{\mathbb{S}}\}_{t \in [0, T]}$-martingales:
    \begin{gather*}
        \widehat{J^*}(t, X_t^{\bpi, \mathbb{S}}) - \int_0^t \bigg(\frac{\partial \widehat{J^*}}{\partial s}(s, X_{s-}^{\bpi, \mathbb{S}}) + \sum_{y \in \calX} \widehat{J^*}(s, y) \lambda(y \mid s, X_{s-}^{\bpi, \mathbb{S}}, a_{s}^{\bpi, \mathbb{S}}) \bigg) \mathrm{d}s, \quad t \in [0, T], \\
        \sum_{y\in\mathcal X}\sum_{z\neq y}
        \int_{(0,t]}
        \rho(s,y,a_s^{\bpi,\mathbb S},z) \,\mathrm{d}N_{yz}^{\bpi,\mathbb S}(s) -
        \int_0^t
        \sum_{z\neq X_{s-}^{\bpi,\mathbb S}}
        \rho(
            s,X_{s-}^{\bpi,\mathbb S},
            a_s^{\bpi,\mathbb S},z
        )
        \lambda(
            z\mid s,X_{s-}^{\bpi,\mathbb S},
            a_s^{\bpi,\mathbb S}
        )
        \mathrm ds,
        \quad t\in[0,T].
    \end{gather*}
    Taking appropriate sum and difference of the three martingales above, we derive that the process $\{\int_0^t D(s, X_{s-}^{\bpi, \mathbb{S}}, a_{s}^{\bpi, \mathbb{S}}) \mathrm{d}s: t \in [0, T]\}$
    is a continuous finite-variation martingale, where
    \[
    D(t, x, a) \triangleq \frac{\partial \widehat{J^*}}{\partial t}(t,x) +\sum_{y\in\mathcal{X}} \widehat{J^*}(t,y)\lambda(y\mid t,x,a) +R(t,x,a)-\widehat{q^*}(t,x,a), \qquad (t, x, a) \in \mathbb{M}. 
    \]
    Therefore,
    \begin{align*}
        D(t, X_{t-}^{\bpi, \mathbb{S}}, a_t^{\bpi, \mathbb{S}})=0 \qquad \mathrm{d}t\otimes\mathbb{P}_x\text{-a.e.}.
    \end{align*}
    Tonelli's theorem gives
    \begin{align}\label{eq:int-D-equal-0}
    \int_{t_k}^{t_{k+1}}
    \mathbb E_x\bigl[
        \vert D(t, X_{t-}^{\bpi, \mathbb{S}}, a_t^{\bpi, \mathbb{S}}) \vert
    \bigr]\mathrm{d}t = 0, \qquad \forall\ x \in \calX.
    \end{align}
    Next, we show that, for every $x\in \calX$,
    \begin{align}\label{eq:D-equal-0}
        D(t, x, a)=0, \qquad \mathrm{d}t\otimes\mu(\mathrm{d}a)\text{-a.e. on } [0, T]\times\calA.
    \end{align}
    We prove this by contradiction. Suppose \eqref{eq:D-equal-0} does not hold for some $y\in\calX$. Then there exists $k\in\{0,\ldots,K-1\}$ such that the set
    \begin{align}\label{eq:set-S-x}
    S_k^y
    \triangleq
    \left\{
    (t,a)\in(t_k,t_{k+1}]\times\calA:
    \vert D(t,y,a)\vert > 0 
    \right\}
    \end{align}
    has positive $\mathrm{d}t\otimes\mu(\mathrm{d}a)$-measure.
    We consider the grid sample state process $X^{\bpi, \mathbb{S}}$ initialized at $X_0^{\bpi, \mathbb{S}}=y$, and define
    \[
    E_k^y
    \triangleq
    \{
    X_t^{\bpi,\mathbb S}=y
    \text{ for all }t\in[0,t_k]
    \}.
    \]
    Since $\lambda^*(y) = \sup_{t \in [0, T],\, a \in \mathcal{A}} \lambda(t, y, a) < \infty$, it follows that 
    \begin{align*}
        \mathbb P_y(E_k^y) \geq \exp\{- \lambda^*(y) t_k\} > 0.
    \end{align*}
    Next, let
    \(
        \tau_k^y
        \triangleq
        \inf\{t>t_k:X_t^{\bpi,\mathbb{S}}\neq y\}
    \), conditional on \(E_k^y\) and the action $A_k=a$ selected at $t_k$, 
    we have
    \[
        P_k^y(a)
        \triangleq
        \mathbb{P}_y (\tau_k^y>t_{k+1}
        \mid E_k^y,\, A_k=a )
        =
        \exp\left\{-\int_{t_k}^{t_{k+1}}\lambda(t,y,a)\,\mathrm{d}t\right\} \geq e^{-\lambda^*(y)(t_{k+1}-t_k)}>0.
    \] 
    Since conditional on $E_k^y$, $A_k$ is sampled from $\bpi(\cdot\mid t_k,y)$, and $a_t^{\bpi,\mathbb{S}} = A_k$ for $t \in (t_k,t_{k+1}]$, it follows that
    \begin{align*}
        {}&\int_{t_k}^{t_{k+1}}\mathbb E_{y}\bigl[
        \vert D(t, X_{t-}^{\bpi, \mathbb{S}}, a_t^{\bpi, \mathbb{S}}) \vert\bigr]\, \mathrm{d}t \\
        \geq {}&\Prob(E_k^y) \cdot \int_{t_k}^{t_{k+1}}\mathbb E_{y}\Bigl[
        \vert D(t, X_{t-}^{\bpi, \mathbb{S}}, A_k) \vert \, \big| \, E_k^y \Bigr]\, \mathrm{d}t \\
        = {}&\Prob(E_k^y) \cdot \int_{t_k}^{t_{k+1}} \int_{\calA} \mathbb E_{y}\Bigl[
        \vert D(t, X_{t-}^{\bpi, \mathbb{S}}, a) \vert \, \big| \, E_k^y, A_k=a\Bigr] \bpi(\mathrm{d}a \mid t_k, y)\, \mathrm{d}t \\
        \geq {}&\Prob(E_k^y) \cdot \int_{t_k}^{t_{k+1}} \int_{\calA} \mathbb E_y\Bigl[
        \vert D(t, X_{t-}^{\bpi, \mathbb{S}}, a) \vert \boldsymbol{1}_{\{\tau_k^y > t_{k+1}\}} \, \big| \, E_k^y, A_k=a\Bigr] \bpi(\mathrm{d}a \mid t_k, y)\, \mathrm{d}t \\
        = {}& \Prob(E_k^y) \cdot \int_{t_k}^{t_{k+1}} \int_{\calA} \vert D(t, y, a) \vert \, P_k^y(a)\, \bpi(a \mid t_k, y)\, \mu(\mathrm{d}a) \mathrm{d}t.
    \end{align*}
    By the definition of $S_k^y$ in \eqref{eq:set-S-x}, we have $S_k^y \subset (t_k, t_{k+1}]\times \calA$, and hence
    \begin{align*}
        \int_{t_k}^{t_{k+1}}\mathbb E_y\bigl[
        \vert D(t, X_{t-}^{\bpi, \mathbb{S}}, a_t^{\bpi, \mathbb{S}}) \vert\bigr]\, \mathrm{d}t \geq \Prob(E_k^y) \cdot \int_{S_k^y} \vert D(t, y, a) \vert \, P_k^y(a)\, \bpi(a \mid t_k, y)\, \mu(\mathrm{d}a) \mathrm{d}t. 
    \end{align*}
    Since $\bpi(\cdot \mid t_k, y) \sim \mu$ for every $\bpi \in \boldsymbol{\Pi}$, we have the density $\bpi(a \mid t_k, y) > 0$ for $\mu$-a.e. on $\calA$. 
    Moreover, $S_k^y$ has positive $(\mathrm{d}t \otimes \mathrm{d}\mu)$-measure, and $\vert D(t, y, a) \vert > 0$ on $S_k^y$, $P_k^y(a)>0$ for all $a\in \calA$, and $\Prob(E_k^y) > 0$, it follows that
    \begin{align*}
        \int_{t_k}^{t_{k+1}}\mathbb E_y\bigl[
        \vert D(t, X_{t-}^{\bpi, \mathbb{S}}, a_t^{\bpi, \mathbb{S}}) \vert\bigr]\, \mathrm{d}t > 0. 
    \end{align*}
    This contradicts the result derived in \eqref{eq:int-D-equal-0}, and hence the desired result \eqref{eq:D-equal-0} holds.

    From \eqref{eq:D-equal-0}, we have
    \begin{align}\label{eq:pf-optimal-J-q-equal-0}
        \widehat{q^*}(t,x,a) = \frac{\partial \widehat{J^*}}{\partial t}(t,x) + H(t,x,a, \widehat{J^*}(\cdot, \cdot)), \qquad \mathrm{d}t\otimes\mu(\mathrm{d}a)\text{-a.e. on } [0, T]\times\calA.
    \end{align}
    Since $\widehat{q^*}$ satisfies the normalization condition in \eqref{eq:optimal-J-q-constraints},
    we have
    \begin{align*}
    1
    = \int_{\mathcal A} \exp \{\frac{1}{\gamma}\widehat{q^*}(t,x,a)\}\mu(\mathrm{d}a) = \exp\{\frac{1}{\gamma}\frac{\partial \widehat J^*}{\partial t}(t,x)\} \cdot 
    \int_{\mathcal A} \exp\{\frac{1}{\gamma}H(t,x,a,\widehat J^*(\cdot, \cdot))\}\mu(\mathrm{d}a).
    \end{align*}
    Taking logarithms on both sides yields
    \begin{align}\label{eq:pf-optimal-J-q-exploratory-HJB}
        \frac{\partial \widehat{J^*}}{\partial t}(t, x) +  \gamma \log\bigg[ \int_{\calA}  \exp \{ \frac{1}{\gamma}H(t,x, a, \widehat{J^*}(\cdot, \cdot) ) \} \mu(\mathrm{d}a) \bigg]= 0.
    \end{align}
    Now, we define a policy $\widehat{\bpi^*} \in \boldsymbol{\Pi}$ as follows
    \begin{align*}
        \widehat{\bpi^*}(a \mid t, x) \triangleq \frac{\exp\{\frac{1}{\gamma} H(t, x, a, \widehat{J^*}(\cdot, \cdot))\}}{\int_{\calA} \exp\{\frac{1}{\gamma} H(t, x, a, \widehat{J^*}(\cdot, \cdot))\} \mu(\mathrm{d}a)}, \qquad (t, x, a) \in \mathbb{M}.
    \end{align*}
    It follows from Lemma \ref{lem:optimal-policy-for-general} that
    \begin{align}\label{eq:pf-optimal-J-q-HJB-part}
        &\sup_{ \boldsymbol{\pi} \in \boldsymbol{\Pi}} \int_{\calA} \{ H(t,x, a, \widehat{J^*}(\cdot, \cdot) ) - \gamma \log \boldsymbol{\pi}(a \mid t, x)\} \boldsymbol{\pi}(\mathrm{d}a \mid t, x) \nonumber \\ 
        = {}&\int_{\calA} \{ H(t,x, a, \widehat{J^*}(\cdot, \cdot) ) - \gamma \log \widehat{\bpi^*}(a \mid t, x)\} \widehat{\bpi^*}(\mathrm{d}a \mid t, x) \nonumber \\
        = {}&\gamma \log\bigg[ \int_{\calA}  \exp \{ \frac{1}{\gamma}H(t,x, a, \widehat{J^*}(\cdot, \cdot) ) \} \mu(\mathrm{d}a) \bigg] \nonumber \\
        = {}& - \frac{\partial \widehat{J^*}}{\partial t}(t, x),
    \end{align}    
    where the last equality is due to \eqref{eq:pf-optimal-J-q-exploratory-HJB}.
    Since $\widehat{J^*}$ satisfies Equation \eqref{eq:pf-optimal-J-q-HJB-part} and the terminal condition $\widehat{J^*}(T, x) = h(x)$ in \eqref{eq:optimal-J-q-constraints}, and in light of Proposition~\ref{prop:HJB}, we conclude that $\widehat{J^*}$ is the optimal value function. Moreover, from \eqref{eq:pf-optimal-J-q-equal-0} and the definition of the optimal $q$-function, we conclude that $\widehat{q^*}$ is the optimal $q$-function.
\end{itemize}
\end{document}